\newtheorem{theorem}{Theorem}
\newtheorem{definition}{Definition}
\newtheorem{lemma}{Lemma}
\newtheorem{fact}{Fact}
\newcommand{\bmu}{\pmb{\mu}}
\newcommand{\bc}{\pmb{c}}
\newcommand{\prob}{\Pr}
\newcommand{\hmu}{\hat{\mu}}
\newcommand{\hc}{\hat{c}}
\newcommand{\tdt}{\tilde{t}}
\newcommand{\ex}{\mathbf{E}}
\newcommand{\ind}{\mathbf{I}}
\newcommand{\alg}{\pi_{\alpha}}
\newcommand{\algh}{\pi_0 }
\newcommand{\bfac}{\mathit{BernoulliFactory}}
\newcommand{\cI}{\Phi_{\theta, p, \epsilon}}
\newcommand{\cIzero}{\Phi_{0, p, \epsilon}}
\newcommand{\cIalpha}{\Phi_{\alpha, p, \epsilon}}
\newcommand{\rIzero}{\phi_{0, p, \epsilon}}
\newcommand{\qreg}{{\sf Quality\_Reg}}
\newcommand{\creg}{{\sf Cost\_Reg}}
\newcommand{\mqreg}{{\sf Mod\_Quality\_Reg}}
\newcommand{\eg}{\emph{e.g., }}
\newcommand{\ie}{\emph{i.e., }}
\newcommand{\MAB}{\textsf{MAB }}
\newcommand{\MABnosapce}{\textsf{MAB}}
\newcommand{\muucb}{\mu^{\sf UCB}}
\newcommand{\mulcb}{\mu^{\sf LCB}}
\newcommand{\ber}{Ber}
\newcommand{\optarm}{i_*}
\newcommand{\mainALG}{CS-ETC}
\newcommand{\consTS}{CS-TS}
\newcommand{\consUCB}{CS-UCB} 
\newcommand{\consTSfull}{\textsf{Cost-Subsidized TS}}
\newcommand{\consUCBfull}{\textsf{Cost-Subsidized UCB}}
\newcommand{\mainALGfull}{\textsf{Cost-Subsidized Explore-Then-Commit}}
\begin{document}


\twocolumn[

\aistatstitle{Multi-Armed Bandits with Cost Subsidy}

\aistatsauthor{ Deeksha Sinha \And Karthik Abinav Sankararaman \And  Abbas Kazerouni \And Vashist Avadhanula}

\aistatsaddress{ MIT/Facebook \And  Facebook \And  Facebook \And  Facebook } ]

\begin{abstract}
In this paper, we consider a novel variant of the multi-armed bandit (MAB) problem, \textit{\MABnosapce\; with cost subsidy}, which models many real-life applications where the learning agent has to pay to select an arm and is concerned about optimizing cumulative costs and rewards. We present two applications, \emph{intelligent SMS routing problem} and \emph{ad audience optimization problem} faced by several businesses (especially online platforms), and show how our problem uniquely captures key features of these applications. We show that naive generalizations of existing \MABnosapce\; algorithms like Upper Confidence Bound and Thompson Sampling do not perform well for this problem. We then establish a fundamental lower bound  
on the performance of any online learning algorithm for this problem, highlighting the hardness of our problem in comparison to the classical \MABnosapce\; problem. 
We also present a simple variant of \textit{explore-then-commit} and establish near-optimal regret bounds for this algorithm. Lastly, we perform extensive numerical simulations to understand the behavior of a suite of algorithms for various instances and recommend a practical guide to employ different algorithms. 
\end{abstract}

\section{Introduction}
\label{sec:intro}
In the traditional (stochastic) \MABnosapce\; problem (\cite{robbins1952some}), the learning agent has access to a set of $K$ actions (arms) with unknown but fixed reward distributions and has to repeatedly select an arm to maximize the cumulative reward. Here, the challenge is designing a policy that balances the tension between acquiring information about arms with little historical observations and exploiting the most rewarding arm based on existing information. The aforementioned exploration-exploitation trade-off has been extensively studied, leading to some simple but extremely effective algorithms like Upper Confidence Bound (\cite{Auer2002}) and Thompson Sampling (\cite{thompson, AgrawalTS_nearopt}), which have been further generalized and applied in a wide range of application domains, including online advertising (\cite{langford2008epoch, cesa2014regret}, \cite{chapelle}), recommendation systems (\cite{li2015counterfactual, li2011unbiased, agrawalnear}), social networks and crowd sourcing (\cite{anandkumar2011distributed,sankararaman2019social}, \cite{slivkins2014online}); see \cite{MAL-024} and \cite{MAL-068} for a detailed review. However, most of these approaches cannot be generalized to settings involving multiple metrics (for example, reward and cost) when the underlying trade-offs between these metrics are not known a priori. 

In many real-world applications of \MABnosapce, some of which we will elaborate below, it is common for the agent to incur costs to play an arm, with \textit{high performing arms} costing more. Though one can model this in the traditional \MAB framework by considering cost subtracted from the reward as the modified objective, such a modification is not always meaningful, particularly in settings where the reward and cost associated with an arm represent different quantities (for example, click rate and cost of an ad). In such problems, it is natural for the learning agent to \textit{optimize} for both the metrics, typically avoiding incurring exorbitant costs for a marginal increase in cumulative reward. Motivated by the aforementioned scenario, in this paper, we consider a variant of the \MAB problem, where the agent is not only concerned about balancing the exploration-exploitation trade-offs to maximize the cumulative reward but also balance the trade-offs associated with multiple objectives that are intrinsic to several practical applications. More specifically, in this work, we study a stylized problem, where to manage costs, the agent is willing to tolerate a small loss from the \textit{highest reward}, measured as the reward that the traditional \MAB problem could obtain in the absence of costs. We refer to this problem as \MAB problem with a cost subsidy (see Section  \ref{sec:prob_form} for exact problem formulation), where the subsidy refers to the amount of reward the learning agent is willing to forgo to improve costs. Before explaining our problem and technical contributions in detail, we will elaborate on the applications that motivate this problem. 

\medskip \noindent {\bf Intelligent SMS Routing.} Many businesses such as banks, delivery services, airlines, hotels, and various online platforms send SMSes (text messages) to their users for various reasons, including two-factor authentication, order confirmations, appointment reminders, transaction alerts, and as a direct marketing line (see \cite{twiliober}). These text messages, referred to as Application-to-Person (A2P) messages, constitute a significant portion of all text messages sent through cellular networks today. In fact, A2P messages are forecasted to be a \$86.3 billion business by 2025 (\cite{MarketWatch}). 

To deliver these messages, businesses typically enlist the support of telecom aggregators, who have private agreements with mobile operators. Each aggregator offers a unique combination of quality, as measured by the fraction of text messages successfully delivered by them and price per message. Surprisingly, it is common for delivery rates of text messages not to be very high (see \cite{canlas2010quantitative,meng2007analysis,zerfos2006study,osunaderoute} for QoS analysis in different geographies) and for aggregator's quality to fluctuate with time due to various reasons ranging from network outage to traffic congestion. Therefore, the platform's problem of balancing the tension between inferring aggregator's quality through exploration and exploiting the current \textit{best performing aggregator} to maximize the number of messages delivered to users leads to a standard \MAB formulation. However, given the large volume of messages that need to be dispatched, an \MAB based solution that focuses exclusively on the quality of the aggregator could result in exorbitant spending for the business. A survey of businesses shows that the number of text messages they are willing to send will significantly drop if the cost per SMS is increased by a few cents per SMS (\cite{Ovum}). Moreover, in many situations, platforms have backup communication channels such as email-based authentication or notifications via in-app/website features. Though not as effective as a text message in terms of read rate, it can be used if guaranteeing the text message delivery proves to be very costly. 
Therefore, it is natural for businesses to prefer an aggregator with lower costs as long as their quality is comparable to the aggregator with the best quality. 

\medskip \noindent {\bf Ad-audience Optimization.} We now describe another real-world application in the context of online advertisements. Many advertisers (especially small-to-medium scale businesses) have increasingly embraced the notion of \emph{auto-targeting} where they let the advertising platform identify a \emph{high-quality} audience group (\eg \cite{koningstein2006suggesting, Amazon, Facebook, Google}). To enable this, the platform explores the audience-space to identify \emph{cheaper} opportunities that also provide high click-through-rate (ctr) and conversion rate. Here, different audience groups can have different yields, i.e., quality (CTR/conversion rate) for a specific ad. However, it may require vastly different bids to reach different audiences due to auction overlap with other ad campaigns with smaller audience targeting. Thus, the algorithm is faced with a similar trade-off; as long as a particular audience-group gives a high-yield, the goal is to find the cheapest one.

We now present a novel formulation of a multi-armed bandit problem that captures the key features of these applications. Our goal is to develop a cost-sensitive \MAB  algorithm that balances both the exploration-exploitation trade-offs as well as the tension between conflicting metrics in a multi-objective setting. 


\subsection{Problem Formulation}\label{sec:prob_form}
    To formally state our problem, given an instance $I$, in every round $t \in [T]$ the agent chooses an arm $i \in [K]$ and realizes a reward $r_{t},$ sampled independently from a fixed, but unknown distribution $\mathcal{F}_i$ with mean $\mu_i$ (or $\mu_i^I$) and incurs a cost $c_{i}$ (or $c_i^I$), which is known a priori. Here, to manage costs, we allow the agent to be agnostic between arms, whose expected reward is greater than $1- \alpha$ fraction of the highest expected reward, for a fixed and known value of $\alpha$, which we refer to as the \textit{subsidy factor}. The agent's objective is to learn and pull the cheapest arm among these \textit{high}-quality arms as frequently as possible. 
    
   More specifically, let $m_{*}$ denote the arm with highest expected mean, \ie $m_{*} = {\text{argmax}_{i \in [K]}} \; \mu_i,$  and $\mathcal{C}_{*}$ be the set of arms whose expected reward is within $1- \alpha$ factor of the highest expected reward, \ie $\mathcal{C}_{{*}} = \{i \in [K] \;|\; \mu_i \geq (1-\alpha) \mu_{m_{*}}\}.$ We refer to the quantity $(1-\alpha)\mu_{m_{*}}$ as the \textit{smallest tolerated reward}. Without loss of generality, we assume the reward distribution has support $[0,1]$. The agent's goal is to design a policy (algorithm) $\pi$ that will learn the cheapest arm whose expected reward is at least as large as the smallest tolerated reward. In other words, the agent needs to learn the identity and simultaneously maximize the number of plays of arm $\optarm={\text{argmin}_{i \in  \mathcal{C}_{*}}}\; c_i.$  Since in the SMS application, the reward is the quality of the chosen aggregator, we will use the terms reward and quality interchangeably.  

To measure the performance of any policy $\pi$, we propose two notions of regret - quality and cost regret, with the agent's goal being minimizing both of them:
\begin{equation}\label{eq:regret}
\begin{aligned}
& \qreg_\pi(T, \alpha,  \bmu, \bc) \\ &\qquad = \ex \left[ \sum_{t=1}^T \max\{ (1-\alpha) \mu_{m_{*}} - \mu_{\pi_t},0\} \right], \\
& \creg_\pi(T, \alpha, \bmu, \bc) \\ & \qquad = \ex \left[ \sum_{t=1}^T \max\{c_{\pi_{t}} - c_{\optarm},0\} \right],
\end{aligned}
\end{equation}
where $\bc=(c_1, \cdots c_K), \bmu = (\mu_1, \cdots \mu_K)$ and the expectation is over the randomness in policy $\pi$. Equivalently, the cost and quality regret of policy $\pi$ on an instance $I$ of the problem is denoted as $\qreg_\pi(T, \alpha,I)$ and $\creg_\pi(T, \alpha,I)$ where the instance is defined by the distributions of the reward and cost of each arm. 
The objective then is to design a policy that simultaneously minimizes both the cost and quality regret for all possible choices of $\bmu$ and $\bc$ (equivalently all instances I). 

\noindent {\bf Choice of Objective Function.}
     Note that a parametrized linear combination of reward and cost metrics, i.e., $\mu-\lambda c$ for an appropriately chosen $\lambda$ is a popular approach to balance cost-reward trade-off. From an application stand-point, there are two important considerations that favor using our objective instead of a linear combination of reward and cost metrics. First, in a real-world system, we need explicit control over the parameter $\alpha$ that is not instance-dependent to understand and defend the trade-off between the various objectives. From a product manager's perspective, the \textit{indifference} among the arms within $(1-\alpha)$ of the arm with the highest mean reward is a transparent and interpretable compromise to lower costs. Second, for the intelligent SMS routing application discussed earlier, different sets of aggregators operate in different regions. Thus, separate $\lambda$ values would need to be configured for each region, making the process cumbersome. 
     
     Further note that, the setting considered in this paper is not equivalent to taking a parametrized linear combination of reward and cost metrics. In particular, for any specified subsidy factor $\alpha$, the value $\lambda$  required in the linear objective function, for $\optarm$ to be the optimal arm would depend on the cost and reward distributions of the arms. Therefore, using a single value of $\lambda$ and relying on standard \MAB algorithms would not lead to the desired outcome for our problem. 
     Moreover, in our setting,  the cost and quality regret measures are defined with respect to different benchmark arms ($i_*$ and $m_*$ respectively). The reward of the played arm is compared to $(1-\alpha)$ times the reward of the highest mean reward arm and not directly the mean reward of the arm $m_*$ as would be the case in a single objective based formulation.  

\subsection{Related Work}
Our problem is closely related to the $\MABnosapce$ with multiple objectives line of work, which has attracted considerable attention in recent times. The existing literature on multi-objective $\MABnosapce$ can be broadly classified into the following categories. 

\noindent {\bf Bandits with Knapsacks (BwK).} Bandits with knapsacks (BwK), introduced in the seminal work of  \cite{Badanidiyuru:2018} is a general framework that considers the standard \MABnosapce\;problem under the presence of additional budget/resource constraints. The BwK problem encapsulates a large number of constrained bandit problems that naturally arise in many application domains, including dynamic pricing, auction bidding, routing, and scheduling (see \cite{tran2012knapsack,AD14,Immorlica19}). In this formulation, the agent has access to a set of $d$ finite resources and $K$ arms,  each associated with a reward distribution. Upon playing arm $a$ at time $t$, the agent realizes a reward of $r_t$ and incurs a penalty of $c^{(i)}_t$ for resource $i$, all drawn from a fixed, but unknown distribution corresponding to the arm. The objective of the agent is to maximize the cumulative reward before one of the resources is completely depleted. Although appealing in many applications, BwK formulation requires \emph{hard} constraint on resources (cost in our setting) and hence, cannot be easily generalized to our problem. In particular, in the cost subsidized \MAB problem, the equivalent budget limits depend on the problem instance and therefore cannot be determined a priori. 

\noindent {\bf Pareto Optimality and Composite Objective.}  The second formulation focuses on identifying Pareto optimal alternatives and uniformly choosing among these options  (see \cite{drugan2013designing, yahyaa2014annealing, paria2018flexible, yahyaa2015thompson}). These approaches do not apply to our problem. Some of the Pareto alternatives could have extreme values for one of the metrics, for example, having a meager cost and low quality or extremely high cost and quality, making them undesirable for the applications discussed earlier. Closely related to this line of work is the set of works that focus on a composite objective by appropriately weighting the different metrics (see \cite{paria2018flexible, yahyaa2015thompson}). Such formulations also do not immediately apply to our problem. In the SMS and ad applications discussed earlier, it is not acceptable to drop the quality beyond the allowed level irrespective of the cost savings we could obtain. Furthermore, in the SMS application, the trade-offs between quality and costs could vary from region to region, making it hard to identify a good set of weights for the composite objective (see Section \ref{sec:prob_form}).

\noindent {\bf Conservative Bandits and Bandits with Safety Constraints.} Two other lines of work that are recently receiving increased attention, particularly from practitioners, are \textit{bandits with safety constraints} (see \cite{daulton2019thompson,amani2020generalized,galichet2013exploration}) and \textit{conservative bandits} (see \cite{wu2016conservative, kazerouni2017conservative}). In both these formulation, the algorithm chooses one of the arms and receives a reward and a cost associated with it.
The goal of the algorithms is to maximize the total reward obtained while ensuring that either the chosen arm is within a pre-specified threshold (when costs of arms are unknown a priori) or the reward of the arm is at least a specified fraction of a known benchmark arm. Neither of these models exactly captures the requirements of our applications: a) we do not have a hard constraint on the acceptable cost of a pulled arm. In particular, choosing low-quality aggregators to avoid high costs (even for a few rounds) could be disastrous since it leads to bad user experience on the platform and eventual churn, and b) the equivalent benchmark arm in our case, i.e., the arm with the highest mean reward is not known a priori.

\noindent {\bf Best Arm Identification.} Apart from the closely related works mentioned above, our problem of identifying the \textit{cheapest} arm whose expected reward is within an acceptable margin from the highest reward can be formulated as a stylized version of the \emph{best-arm identification problem} (\cite{katz2019top,jamieson2014best,chen2014combinatorial,cao2015top,chen2016pure}). However, in many settings and particularly applications discussed earlier, the agent's objective is optimizing cumulative reward and not just identifying the \textit{best arm}. 

 
 \subsection{Our Contributions}
 \noindent {\bf Novel Problem Formulation.} In this work, we propose a stylized model, \textit{\MAB with a cost subsidy}, and introduce new performance metrics that uniquely capture the salient features of many real-world online learning problems involving multiple objectives. For this problem, we first show that naive generalization of popular algorithms like Upper Confidence Bound (UCB) and Thompson Sampling (TS) could lead to poor performance on the metrics. In particular, we show that the naive generalization of TS for this problem would lead to a linear cost regret for some problem instances. 
 
 \noindent {\bf Lower Bound.} We establish a fundamental limit on the performance of \emph{any online algorithm} for our problem. More specifically, we show that any online learning algorithm will incur a regret of $\Omega(K^{1/3}T^{2/3})$ on either the cost or the quality metric (refer to (\ref{eq:regret})), further establishing the \textit{hardness of our problem} relative to the standard \MABnosapce\; problem, for which it is possible to design algorithms that achieve worst-case regret bound of $\tilde{O}(\sqrt{KT})$. We introduce a novel reduction technique to derive the above lower bound, which is of independent interest.  
 
 \noindent {\bf Cost Subsidized Explore-Then-Commit.} We present a simple algorithm based on the \textit{explore-then-commit} (ETC) principle and show that it achieves near-optimal performance guarantees. In particular, we establish that our algorithm achieves a worst-case bound of $O(K^{1/3}T^{2/3}\sqrt{\log{T}})$ for both cost and quality regret. A key challenge in generalizing the ETC algorithm for this problem arises from having to balance between two asymmetric objectives. 
 We also discuss generalizations of the algorithm for settings where the cost of the arms is not known a priori. Furthermore, we consider a special scenario of bounded costs, where naive generalizations of TS and UCB work reasonably well and establish worst-case regret bounds.

 \noindent {\bf Numerical Simulation.} Lastly, we perform extensive simulations to understand various regimes of the problem parameters and compare different algorithms. More specifically, we consider scenarios where naive generalizations of UCB and TS, which have been adapted in real-life implementations (see \cite{daulton2019thompson}) perform well and settings where they perform poorly, which should be of interest to practitioners.  
 
 

    \subsection{Outline}
    The rest of this paper is structured as follows. In Section~\ref{sec:ts_linear_regret}, we show that the naive generalization to TS or UCB algorithms performs poorly, and in Section ~\ref{sec:lowerBounds}, we establish lower bounds on the performance of any algorithm for \MABnosapce\;with cost subsidy problem. In Section~\ref{sec:exploreFirst}, we present a variation of the ETC algorithm and show that it achieves a near-optimal regret bound of $\tilde{O}(K^{1/3}T^{2/3})$ for both the metrics. In section~\ref{sec:specialCases}, we show that with additional assumptions, it is possible to show improved performance bounds for naive generalization of existing algorithms. Finally, in section~\ref{sec:experiments}, we perform numerical simulations to explore various regimes of the instance-space.
    
\section{Performance of Existing MAB Algorithms}
\label{sec:ts_linear_regret}
In this section, we consider a natural extension of two popular \MAB  algorithms, TS and UCB, for our problem and show that such adaptations perform poorly. This highlights the challenges involved in developing good algorithms for the \MAB problem with cost subsidy. In particular, we establish theoretically that for some problem instances, the TS variant incurs a linear cost regret and observe similar performance for the UCB variant empirically. Our key focus on TS in this section is primarily motivated by the superior performance observed over a stream of recent papers in the context of TS versus more traditional approaches such as UCB (see \cite{scott2010modern,chapelle,may2012optimistic,agrawal2017thompson}).

We present the details of TS and UCB adaptations in Algorithm \ref{alg::ts}, which we will refer to as \consTSfull (\consTS)\; and \consUCBfull(\consUCB)\;respectively. These extensions are inspired by \cite{daulton2019thompson}, which demonstrates empirical efficacy on a related (but different) problem. Briefly, in the \consTS (\consUCB)\;variation, we follow the standard TS (UCB) algorithm and obtain a quality score which is a sample from the posterior distribution (upper confidence bound) for each arm. We then construct a feasible set of arms whose quality scores are greater than   $1-\alpha$  fraction of the highest quality score. Finally, we pull the cheapest arm among the feasible set of arms.


We will now show that \consTS\; with Gaussian priors and posteriors (i.e., Gaussian distribution with mean $\hat{\mu}_i(t)$ and variance $1/T_i(t)$) described in Algorithm \ref{alg::ts} incurs a linear cost regret in the worst case. More precisely, we prove the following result.

\begin{theorem}\label{thm::ts-lin-reg}
For any given $K, \alpha, T$  there exists an instance $\phi$ of problem such that
    $\qreg_{{\sf \consTS }}(T,\alpha,\phi) + \creg_{{\sf \consTS }}(T,\alpha,\phi)$  is $\Omega\left(T\right)$.
\end{theorem}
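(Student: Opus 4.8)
The plan is to exhibit a two-arm instance (padded with dummy arms when $K>2$) on which \consTS\ pays a strictly positive cost on a constant fraction of the rounds. Take arm $1$ with mean $\mu_1=1$ and cost $c_1=1$, and arm $2$ with mean $\mu_2=(1-\alpha)\mu_1=1-\alpha$ and cost $c_2=0$; when $K>2$, add arms with mean $0$ and cost larger than $c_1$, and call the resulting instance $\phi$. Placing arm $2$ \emph{exactly} on the subsidy boundary is the whole point: here $\mathcal{C}_*=\{1,2\}$ and $\optarm=2$, so the only way to avoid cost regret is to play the cheap arm $2$, yet arm $2$'s membership in the \emph{empirical} feasible set will be a coin flip that never resolves. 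Since playing arm $2$ incurs zero quality regret and zero cost regret, whereas playing any other arm incurs cost regret at least $c_1-c_2=1$, it suffices to show $\prob(\text{arm }2\text{ is excluded from the feasible set at round }t)\ge c_0$ for a constant $c_0>0$ and all $t$ past the $O(K)$ initialization rounds. Summing then gives $\creg_{\consTS}(T,\alpha,\phi)\ge c_0(T-O(K))=\Omega(T)$, and since $\qreg_{\consTS}\ge 0$, the sum is $\Omega(T)$.

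Next I would reduce the feasibility test to a two-sample comparison. Writing $s_i$ for the posterior sample of arm $i$, arm $2$ lies in the feasible set iff $s_2\ge(1-\alpha)\max_j s_j$; since $\max_j s_j\ge s_1$, arm $2$ is \emph{excluded} whenever $s_2<(1-\alpha)s_1$, so it is enough to lower bound $\prob(s_2<(1-\alpha)s_1)$ (the dummy arms only raise the threshold and can therefore only help). Conditioning on the history $\mathcal{H}_{t-1}$, the samples are independent with $s_i\sim\mathcal{N}(\hat\mu_i(t),1/T_i(t))$, so
\begin{equation*}
\prob\big(s_2<(1-\alpha)s_1\mid\mathcal{H}_{t-1}\big)=\Phi(X_t),\qquad X_t:=\frac{(1-\alpha)\hat\mu_1(t)-\hat\mu_2(t)}{\sqrt{(1-\alpha)^2/T_1(t)+1/T_2(t)}},
\end{equation*}
where $\Phi$ is the standard normal c.d.f. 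Because the true means satisfy $(1-\alpha)\mu_1-\mu_2=0$ by construction, the numerator equals $(1-\alpha)\eta_1-\eta_2$ with $\eta_i:=\hat\mu_i(t)-\mu_i$; that is, $X_t$ measures \emph{only} the estimation error, rescaled by the posterior width.

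The heart of the argument is to show $\ex[\Phi(X_t)]\ge c_0$, equivalently that $X_t$ is not strongly negative too often. Heuristically $X_t=\Theta(1)$ and $\prob(\text{exclude})\to\tfrac12$: the posterior standard deviation $1/\sqrt{T_i(t)}$ dominates the true fluctuation scale $\sigma_i/\sqrt{T_i(t)}$ of $\eta_i$ because the reward variance obeys $\sigma_i^2\le\tfrac14<1$, so numerator and denominator of $X_t$ are of the same order and $X_t$ stays $O(1)$. Concretely I would bound $X_t^2\le 2\big(T_1(t)\eta_1^2+T_2(t)\eta_2^2\big)$ and invoke a martingale (tape-model, Freedman/Azuma-type) concentration bound for $\hat\mu_i(t)-\mu_i$ under adaptive sampling to conclude that, with probability bounded away from $0$, $X_t\ge -M$ for an absolute constant $M$; on that event $\Phi(X_t)\ge\Phi(-M)=c_0>0$. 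The main obstacle is exactly this step: the empirical means are averages of \emph{adaptively} chosen rewards, so $\hat\mu_i(t)$ is not a clean i.i.d.\ average and one cannot simply assert $\ex[\eta_i\mid T_i(t)]=0$; the adaptivity must be dispatched by a uniform (anytime) martingale inequality certifying $|\hat\mu_i(t)-\mu_i|=O(1/\sqrt{T_i(t)})$ with constant probability, which is what keeps $X_t$ order one. Crucially, no self-correction erodes this constant: each wrongful exclusion of arm $2$ forces a play of a costlier arm but does nothing to shrink the symmetric posterior noise that drives the coin flip, so the $\Omega(T)$ bound persists.
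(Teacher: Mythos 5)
Your high-level strategy is the same as the paper's: build a threshold-tight instance (cheap arm at or just above the smallest tolerated reward, expensive arm defining the benchmark), observe that quality regret is zero while any exclusion of the cheap arm from the feasible set costs $\Omega(1)$, and show the Gaussian posterior noise keeps the per-round exclusion probability bounded below by a constant, so $\creg_{\consTS}$ is $\Omega(T)$. However, there is a genuine gap at exactly the step you flag as the main obstacle. The tool you invoke --- an anytime martingale inequality certifying $|\hmu_i(t)-\mu_i| = O\big(1/\sqrt{T_i(t)}\big)$ uniformly over $t$ with constant probability --- does not exist for non-degenerate reward distributions: by the law of the iterated logarithm, $\sup_n \sqrt{n}\,|\bar r_n - \mu_i| = \infty$ almost surely, and the correct uniform deviation scale is $\sqrt{\log\log T_i(t)/T_i(t)}$. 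Substituting that scale gives only $X_t \geq -M_t$ with $M_t \sim \sqrt{\log\log t}$, so $\Phi(-M_t) \to 0$ and your per-round constant $c_0$ evaporates; the sum is then no longer $\Omega(T)$. Nor can you restrict the bound to the single round $t$ under consideration, since $T_i(t)$ is adaptively coupled to the reward path of arm $i$ --- the very difficulty you acknowledge but do not resolve.

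The paper sidesteps this issue entirely, and your own instance nearly forces the same fix: it takes all rewards to be \emph{deterministic} (unknown to the agent), so that $\hmu_i(t)$ is a deterministic (shrunk) function of $\mu_i$, the only randomness is the posterior draw, and a Gaussian anti-concentration bound (the paper's Fact \ref{fact1}), combined with an event controlling the pull counts $T_i(t)$, yields the constant per-round probability of playing an expensive arm with no concentration-under-adaptivity argument at all. In your instance, $\mu_1 = 1$ with support $[0,1]$ already makes arm $1$'s rewards deterministic, so $\eta_1 = 0$; if you additionally take arm $2$'s reward to be the constant $1-\alpha$, then $\eta_2 = 0$, hence $X_t = 0$ and the exclusion probability is exactly $\Phi(0) = 1/2$ in every round under the posterior $\mathcal{N}(\hmu_i(t), 1/T_i(t))$ used in Theorem \ref{thm::ts-lin-reg}, after which your summation argument goes through verbatim. (One further caution: the paper's appendix proof works with a proper prior, so the posterior mean is shrunk strictly below $\mu_i$; this bias is exploited for the cheap arm and compensated on the expensive arms via the event $A_{t-1}$ bounding pull counts. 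Your formula for $X_t$ silently assumes the unshrunk posterior mean, which matches the algorithm as stated in the main text but not the appendix's prior-based variant.)
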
    
\begin{proof}[Proof Sketch]
The proof closely follows the lower bound argument in \cite{agrawal2017near}. 
We briefly describe the intuition behind the result.  Consider a scenario where the highest reward arm is an expensive arm while all other arms are cheap and have rewards marginally above the smallest tolerated reward.  In the traditional \MABnosapce\;problem,  the anti-concentration property of the Gaussian distribution (see \cite{agrawal2017near}) ensures samples from the \textit{good arm} would be large enough with sufficient frequency, ensuring appropriate exploration and good performance. However, in our problem, the anti-concentration property would result in playing the expensive arm too often since the difference in the mean qualities is small,  incurring a linear cost regret while achieving zero quality regret. A complete proof of the theorem is provided in Appendix \ref{sec:alg_performance_appendix}.
\end{proof}

The algorithm's poor performance is not limited only to the above instance and usage of Gaussian prior. More generally, the \consTS \ and \consUCB \ algorithms seem to perform poorly whenever the mean reward of the optimal arm is very close to the smallest tolerated reward. We illustrate this through another empirical example. Consider the following instance with two arms, each having Bernoulli rewards  and $T=10,000$. The costs of the two arms are  $c_1=0$ and $c_2=1$. The expected qualities are $\mu_1 = 0.5(1-\alpha) + 1/\sqrt{T}, \mu_2=0.5$ with $\alpha = 0.1$. The prior of the mean reward of both the arms is a Beta(1,1) distribution.  Here, the quality regret will be zero irrespective of which arm is played. But both \consTS \ and \consUCB \ incur significant cost regret as shown in Figure \ref{fig:lin_reg}. (In the figure, we also plot the performance of the key algorithm we propose in the paper (Algorithm \ref{alg:explore_first}) and note that it has much superior performance compared to \consTS \ and \consUCB.)

\begin{algorithm}
\SetAlgoLined
\KwResult{Arm $I_t$ to be pulled in each round $t \in [T]$ }
\SetKwInOut{Input}{Input}\SetKwInOut{Output}{Output}
\Input{$T, K$, prior distribution for mean rewards of all arms $\{\nu_i\}_{i=1}^K$, reward likelihood function $\{L_i\}_{i=1}^K$}
$T_i(1)= 0 \ \forall i \in [K]$\;
\For{$t \in [K]$}{
    $I_t = t$\;
    Play arm $I_t$ and observe reward $r_t$\;
    $T_{i}(t+1) = T_i(t) + \mathbf{1}\{ I_t =i \} \ \forall i \in [K]$\;
    
}
\For{$t \in [K+1, T]$}{
    \For{$i \in [K]$}{
    $ \hmu_i(t) \leftarrow \left( \sum_{\tau=1}^{t-1} r_{\tau} \ind \{ I_{\tau} =i \} \right)/ { T_i(t) }$\;
    $\beta_i(t) \leftarrow \sqrt{ \left(  2 \log{T} \right) /{T_i(t)}}$\;
      \textbf{UCB}: 
       $\mu^{score}_i(t) \leftarrow \min \{ \hmu_i(t) + \beta_i(t), 1 \} $\;
       \textbf{TS}: 
       Sample $\mu^{score}_i(t)$ from  the posterior distribution of arm $i$, $\nu_i\big(\cdot|\{r_s\}_{s \in \{1, 2, \cdots t-1 \} \text{ s.t. } I_s = i}, L_i \big)$\;
    }
    $m_t = \arg \max_{i} \mu^{score}_i(t)$\;
    $Feas(t) = \{i: \mu^{score}_i(t) - (1-\alpha) \mu^{score}_{m_t} \geq 0 \}$\;
    $I_t = \arg \min_{i \in Feas(t)} c_i $\;
    Play arm $I_t$ and observe reward $r_t$\;
    $T_{i}(t+1) = T_i(t) + \mathbf{1}\{ I_t =i \} \ \forall i \in [K]$\;
}
\caption{\textsf{Cost Subsidized TS and UCB Algorithms}}
\label{alg::ts}
\end{algorithm}

\begin{figure}
\centering
	\includegraphics[width=0.8\linewidth]{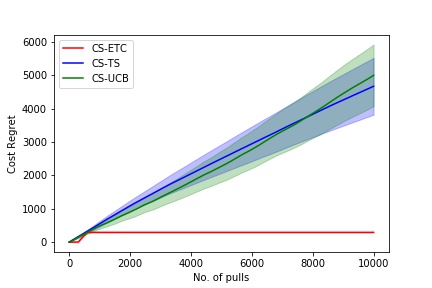}
	\captionof{figure}{Cost regret of various algorithms for an instance where the mean reward of the optimal arm is very close to the smallest tolerated reward. \consTS \ and \consUCB \ incur significant regret. But \mainALG \ attains low cost regret. The width of the error bands is two standard deviations based on 50 runs of the simulation.
	\label{fig:lin_reg}}
\end{figure}

\section{Lower Bound}
\label{sec:lowerBounds}
In this section, we establish that any policy must incur a regret of $\Omega(K^{1/3}T^{2/3})$ on at least one of the regret metrics.
More precisely, we prove the following result.


\begin{theorem}\label{thm:lower_bound}
For any given $\alpha, K, T$ and (possibly randomized) policy $\pi$, there exists an instance $\phi$ of problem \eqref{eq:regret} with $K+1$ arms such that $    \qreg_\pi(T, \alpha, \phi) + \creg_\pi(T,\alpha,\phi) $ is  $ \Omega\left((1-\alpha)^2 K^{\frac{1}{3}}T^{\frac{2}{3}}\right)$ when $0 \leq \alpha \leq 1$ and  $1 \leq K \leq T$.
\end{theorem}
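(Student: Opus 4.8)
The plan is to establish the lower bound via an information-theoretic argument built on a carefully constructed family of hard instances, using the novel reduction technique the authors advertise. The core difficulty that distinguishes this from a classical bandit lower bound is that we must simultaneously reason about two asymmetric regret metrics (cost and quality) measured against two \emph{different} benchmark arms ($m_*$ and $i_*$), and the identity of the optimal arm $i_*$ depends on information the algorithm must actively acquire. Let me sketch the construction and the steps.

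\textbf{Construction of the hard family.} First I would design a base instance with $K+1$ arms in which one special ``anchor'' arm has a high, \emph{known-ish} reward and serves to pin down $\mu_{m_*}$, while the remaining $K$ arms are near-identical: each has mean reward sitting right around the smallest tolerated reward $(1-\alpha)\mu_{m_*}$, with costs arranged so that exactly one of them is the cheap optimal arm $i_*$. The key is to make the gap between a ``qualifying'' arm (one in $\mathcal{C}_*$) and a ``disqualified'' arm as small as an as-yet-undetermined parameter $\Delta$. Concretely, I would let $K$ of the arms have rewards clustered in a window of width $\Theta(\Delta)$ straddling the tolerated-reward threshold, so that whether a given cheap arm actually lies in $\mathcal{C}_*$ cannot be determined without $\Omega(1/\Delta^2)$ pulls. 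An adversary then hides, among these statistically indistinguishable arms, a single cheap arm that is genuinely in $\mathcal{C}_*$ — but the algorithm cannot tell which without heavy exploration.

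\textbf{The two-horned dilemma and the reduction.} The heart of the argument is to show that the algorithm faces a genuine dilemma: to avoid cost regret it must commit to a cheap arm, but to avoid quality regret it must be sure that the cheap arm it commits to actually clears the $(1-\alpha)\mu_{m_*}$ bar; distinguishing the one genuinely-good cheap arm from the slightly-too-weak ones requires $\Omega(1/\Delta^2)$ samples \emph{per arm}, i.e.\ $\Omega(K/\Delta^2)$ exploration rounds total. I would formalize this with a change-of-measure / Pinsker-style argument (the standard $\mathrm{KL}(\text{instance}_0 \,\|\, \text{instance}_j) \lesssim T_j \Delta^2$ bound over pull counts $T_j$), showing that unless the algorithm spends $\Omega(1/\Delta^2)$ pulls on a given arm, it cannot reliably classify it, and hence on some permuted instance it will either play a disqualified cheap arm (paying quality regret $\Theta(\Delta)$ each round over the remaining $\Theta(T)$ rounds) or play the expensive anchor arm (paying cost regret linearly). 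The ``novel reduction'' presumably packages this as: any algorithm with small quality regret induces a classifier for the hidden good arm, whose sample complexity forces large cost regret, and vice versa.

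\textbf{Optimizing the tradeoff.} Finally I would balance the exploration cost against the exploitation penalty. If the algorithm explores for $N$ rounds, it can at best identify the good arm with confidence scaling like $N\Delta^2/K$; committing too early (small $N$) leaves a constant probability of a wrong commitment, incurring $\Omega(\Delta \cdot T)$ regret on one metric, while exploring long (large $N$) directly costs $\Omega(N)$ on the other metric through plays of suboptimal (expensive or low-quality) arms during exploration. Setting $N \asymp K/\Delta^2$ and choosing $\Delta \asymp (K/T)^{1/3}$ to equate the two penalties yields the combined bound $\Omega(K^{1/3}T^{2/3})$; tracking the $(1-\alpha)$ factors through the reward-window construction (since the absolute gaps scale with $(1-\alpha)\mu_{m_*}$) produces the stated $\Omega\bigl((1-\alpha)^2 K^{1/3}T^{2/3}\bigr)$.

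\textbf{Main obstacle.} The hardest part will be the reduction itself: cleanly converting a simultaneous guarantee on two asymmetric metrics against two different benchmarks into a single, tractable hypothesis-testing lower bound. In a standard bandit the benchmark is fixed, so the usual machinery applies directly; here, because $i_*$ is defined relative to the learned set $\mathcal{C}_*$, I must ensure the adversarial instances are genuinely indistinguishable under the algorithm's observations while forcing the two regret terms to trade off against each other rather than being simultaneously small. Getting the quadratic $(1-\alpha)^2$ dependence to fall out correctly — rather than a weaker or incorrectly-scaled factor — will require care in how the reward window and the tolerated-reward threshold are positioned relative to each other.
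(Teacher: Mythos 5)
There is a genuine gap, and it lies in the construction itself: your hard family is inverted relative to what the $\Omega(K^{1/3}T^{2/3})$ rate requires. You hide the $\Theta(\Delta)$-perturbation among the \emph{cheap} arms straddling the threshold, with an expensive anchor pinning down $\mu_{m_*}$. But then acquiring the distinguishing information is itself cheap: an exploratory pull of a just-below-threshold cheap arm incurs quality regret only $\Theta(\Delta)$ (and zero cost regret), so the $\Omega(K/\Delta^2)$ exploration rounds cost only $O(K/\Delta)$ in regret, not the $\Omega(N)$ you assert in your balancing step. Balancing $K/\Delta$ against the commitment penalty $\Delta T$ gives $\Delta \asymp \sqrt{K/T}$ and only an $\Omega(\sqrt{KT})$ bound; adding the $1/\Delta^2$ expensive anchor pulls needed to locate the threshold contributes at best $\Omega(T^{2/3})$ with no $K^{1/3}$ factor, since the threshold uncertainty lives in a single arm. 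The paper's construction is the mirror image: a single cheap arm (cost $0$, mean $p$) is the safe default, and the hidden $\epsilon$-improvement sits on one of the $K$ \emph{expensive} arms (cost $1$, means $p/(1-\theta)$, with arm $a$ bumped to $(p+\epsilon)/(1-\theta)$). In the null instance every exploratory pull of an expensive arm costs a full unit of cost regret, while in the perturbed instance staying on the cheap arm costs $\epsilon$ per round of quality regret; a change-of-measure argument (via a modified quality regret that only penalizes cheap-arm pulls, plus Yao's principle) then yields the genuine $K/\epsilon^2$ versus $\epsilon T$ trade-off, and $\epsilon \asymp p\,(K/T)^{1/3}$ gives $\Omega(p\,K^{1/3}T^{2/3})$. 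Your change-of-measure machinery and per-arm $\Omega(1/\Delta^2)$ sample-complexity reasoning are standard and fine, but without forcing a $\Theta(1)$ price per exploratory pull your family cannot produce the claimed rate.

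The second missing idea is the mechanism behind the $(1-\alpha)^2$ factor, which you leave to ``tracking the $(1-\alpha)$ factors through the reward-window construction.'' The paper does not obtain it by gap bookkeeping in a direct construction; its advertised novel reduction is an $\alpha \to 0$ reduction, not (as you guessed) a conversion of the two-metric guarantee into a classifier. Concretely: the $\alpha=0$ lower bound is proved first (Lemma~\ref{lem:alpha_0_neat}), and then any algorithm $\pi_\alpha$ with $o(K^{1/3}T^{2/3})$ regret on the scaled family is used as a subroutine to build an algorithm for the $\alpha=0$ family, where each pull of an expensive arm is simulated by a Bernoulli factory that converts samples of $\ber(\mu)$ into one sample of $\ber\left(\mu/(1-\alpha)\right)$ at an expected cost of $O\bigl(1/(\delta(1-\alpha))\bigr)$ base pulls. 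One factor of $(1-\alpha)$ is lost in this sample-complexity blow-up, and the other comes from choosing $p = (1-\alpha)/3$ so that the scaled means $(p+\epsilon)/(1-\alpha)$ stay bounded away from $1$ (as the factory requires). Without some such device, your direct construction has no visible source for a quadratic dependence on $1-\alpha$, a difficulty you correctly flag but do not resolve.
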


\subsection{Proof Overview}
We consider the following families of instances to establish the lower bound.  We first prove the result for $\alpha=0$ and then establish a reduction for $\alpha = \theta$ for $0 \leq \theta < 1$,  to the special case of $\alpha = 0$. 

\begin{definition}[Family of instances $\cI$] 
Define a family of instances $\cI$ consisting of instances $\cI^0, \cI^1, \cdots \cI^K$ each with $K+1$ Bernoulli arms indexed by $0,1, \cdots, K$. For the instance $\cI^0$, the costs and mean reward of the $j$-th arm are\\
$ c_j^{\cI^0} =
\begin{cases}
0 & j=0 \\
1 & j\neq 0 
\end{cases}, \hspace{10mm}
\ \mu_j^{\cI^0} =
\begin{cases}
p & j=0 \\
\frac{p}{1-\theta} & j\neq 0 \\
\end{cases}. $ 
for $0 \leq j \leq K$. For the instance $\cI^a$ with $1 \leq a \leq K$, the costs and mean rewards of the $j$-th arm are \\
$ c_j^{\cI^a} =
\begin{cases}
0 & j=0 \\
1 & j\neq 0
\end{cases}, \hspace{5mm}
\ \mu_j^{\cI^a} =
\begin{cases}
p & j= 0 \\
\frac{ p+ \epsilon}{1-\theta} & j=a \\
\frac{p}{1-\theta} & \text{otherwise}  \\
\end{cases}. $ \\
for $0 \leq j \leq K$, where $ 0 \leq \theta <1 ,0<p\leq 1/2, \epsilon >0$ and $( p+ \epsilon)/(1-\theta) < 1$. 


\end{definition}


\begin{lemma}\label{lem:alpha_0_neat}
For any given $p, K, T$ and any (possibly randomized) policy $\pi$, there exists an instance $\phi$ (from the family $\cIzero$) such that $\qreg_{\pi}(T, 0,\phi) + \creg_{\pi}(T, 0,\phi)$ is $\Omega\left( pK^{\frac{1}{3}}T^{\frac{2}{3}} \right)$ when $0 < p \leq 1/2$ and $1 \leq  K \leq T$.
\end{lemma}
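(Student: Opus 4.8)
The plan is to analyze an arbitrary policy $\pi$ across the $K+1$ instances of $\cIzero$ and show that keeping cost regret small on $\Phi^0$ forces large quality regret on some $\Phi^a$. First I would pin down the benchmarks. Since $\theta=0$, on $\Phi^0$ every arm has mean $p$, so $\mu_{m_*}=p$ and the cheapest arm clearing the quality bar is arm $0$; hence $\optarm=0$, the per-round quality regret is always $0$, and the per-round cost regret is $\ind\{\pi_t\neq 0\}$. On $\Phi^a$ ($1\le a\le K$) arm $a$ is the unique arm of mean $p+\epsilon$, so $m_*=a$, $\mathcal{C}_*=\{a\}$, and $\optarm=a$ with cost $1$; thus every arm costs at most $c_{\optarm}$, the per-round cost regret is $0$, and the per-round quality regret is $\epsilon\,\ind\{\pi_t\neq a\}$. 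Writing $N_j^I=\ex_I[\sum_{t}\ind\{\pi_t=j\}]$ for the expected pulls of arm $j$ under instance $I$, I get the clean identities $\creg_\pi(T,0,\Phi^0)=\sum_{j=1}^K N_j^0$ and $\qreg_\pi(T,0,\Phi^a)=\epsilon\,(T-N_a^a)$.

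Next I would run the standard contradiction argument. Suppose the total regret on every instance of the family is below some $\Lambda$. From $\Phi^0$ this yields $\sum_{j=1}^K N_j^0<\Lambda$, so by pigeonhole there is an arm $a$ with $N_a^0<\Lambda/K$, an arm barely explored under $\Phi^0$. The crux is a change-of-measure step bounding how different the pull counts of this arm can be under $\Phi^0$ and $\Phi^a$, the two instances differing only in arm $a$'s reward law. By the divergence decomposition for bandit trajectories, $\mathrm{KL}(\prob_{\Phi^0}\,\|\,\prob_{\Phi^a})=N_a^0\cdot d$ with $d=\mathrm{KL}(\ber(p)\,\|\,\ber(p+\epsilon))$, and Pinsker together with $N_a\le T$ gives $N_a^a\le N_a^0+T\sqrt{\tfrac12 N_a^0\, d}$. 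Using the elementary estimate $d\le 2\epsilon^2/p$ (valid once $p+\epsilon\le 1/2$, with $p\le 1/2$), I obtain $N_a^a\le \Lambda/K+T\epsilon\sqrt{\Lambda/(pK)}$, hence $\qreg_\pi(T,0,\Phi^a)\ge \epsilon\big(T-\Lambda/K-T\epsilon\sqrt{\Lambda/(pK)}\big)$.

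Finally I would close the loop. Since this quality regret is also assumed below $\Lambda$, rearranging shows that either exploration is so scarce that $\Lambda\gtrsim \epsilon T$, or else $\Lambda\gtrsim pK/\epsilon^2$; in either case $\Lambda=\Omega(\min\{\epsilon T,\ pK/\epsilon^2\})$. Taking $\epsilon\asymp (pK/T)^{1/3}$ balances the two terms and gives $\Lambda=\Omega(p^{1/3}K^{1/3}T^{2/3})$, which already dominates the stated $\Omega(pK^{1/3}T^{2/3})$ since $p\le 1$. I would then check the admissibility condition $p+\epsilon\le 1$ used in the KL estimate, capping $\epsilon$ at $p$ in the remaining regime $K>p^2T$, where the bound becomes $\Omega(pT)\ge\Omega(pK^{1/3}T^{2/3})$ because $K\le T$; the hypotheses $1\le K\le T$ and $0<p\le1/2$ ensure these choices are legitimate.

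I expect the main obstacle to be the change-of-measure accounting: getting the divergence decomposition and Pinsker bound to correctly convert a small pull count under $\Phi^0$ into a guaranteed shortfall of good-arm pulls under $\Phi^a$, while keeping the Bernoulli KL estimate $d\lesssim \epsilon^2/p$ both valid and tight enough to survive the optimization over $\epsilon$. The regime bookkeeping that produces the stated factor $p$ (rather than the sharper $p^{1/3}$) from the constraint $\epsilon\le p$ is the other place demanding care.
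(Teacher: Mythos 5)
Your proposal is correct, and its core is the same change-of-measure argument the paper uses: both exploit the statistical indistinguishability of $\cIzero^0$ from $\cIzero^a$, trade the cost regret on $\cIzero^0$ against the $\epsilon$-scaled quality regret on $\cIzero^a$ via Pinsker plus a $KL(\ber(p);\ber(p+\epsilon)) = O(\epsilon^2/p)$ estimate, and balance at $\epsilon \sim (K/T)^{1/3}$. The execution differs in three ways, each arguably cleaner than the paper's. First, the paper introduces a surrogate ``modified quality regret'' $\mqreg$ that charges only pulls of the cheap arm, because its change-of-measure statistic is the aggregate count $N_{exp}$ of expensive pulls; you instead apply the divergence decomposition to the per-arm count $N_a$, for which the \emph{full} quality regret on $\cIzero^a$ is exactly $\epsilon\left(T - \ex^a[N_a]\right)$, so no surrogate is needed. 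Second, the paper averages over a randomized instance ($\cIzero^0$ w.p.\ $1/2$, each $\cIzero^a$ w.p.\ $1/2K$), completes a square over all $K$ arms, and then invokes Yao's principle because its imported Lemma A.1 of Auer et al.\ applies to deterministic algorithms; your pigeonhole on the least-explored arm combined with the divergence decomposition, which holds verbatim for randomized policies in the canonical bandit model, handles arbitrary $\pi$ directly and dispenses with Yao. Third, your choice $\epsilon \asymp (pK/T)^{1/3}$ yields the sharper $\Omega(p^{1/3}K^{1/3}T^{2/3})$ in the regime $K \lesssim p^2T$, which dominates the stated bound since $p \le 1$, whereas the paper fixes $\epsilon = \frac{p}{2}(K/T)^{1/3}$ throughout and settles for the factor $p$. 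Your closing dichotomy ($\Lambda = \Omega(\min\{\epsilon T,\, pK/\epsilon^2\})$) is a correct repackaging of the paper's completed square.

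One bookkeeping slip, affecting constants only: capping $\epsilon$ at $p$ in the regime $K > p^2T$ does not keep your KL estimate valid — at $p = 1/2$, $\epsilon = p$ gives $p + \epsilon = 1$ and the divergence is infinite, and your stated validity condition $p + \epsilon \le 1/2$ is violated for any $p > 1/4$. Cap instead at $\epsilon = p/2$ (as the paper effectively does, since $\frac{p}{2}(K/T)^{1/3} \le p/2$ when $K \le T$): then $1 - p - \epsilon \ge 1/4$, the bound degrades only to $KL \le 4\epsilon^2/p$ (the paper's Lemma \ref{lem:klbound}), and the dichotomy still gives $\Lambda = \Omega\left(\min\{pT,\, K/p\}\right) = \Omega(pT) \ge \Omega\left(pK^{1/3}T^{2/3}\right)$ in that regime, completing your argument as intended.
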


Lemma \ref{lem:alpha_0_neat} establishes that when $\alpha=0$, any policy must incur a regret of $ \Omega( K^{1/3}T^{2/3})$ on an instance from the family $\cIzero.$ To prove Lemma \ref{lem:alpha_0_neat}, we argue that any online learning algorithm will not be able to differentiate the instance $\cIzero^0$ from the instance $\cIzero^a$ for $1\leq a \leq K$ and therefore, must either incur a high cost regret if the algorithm does not select $0^{\sf th}$ arm frequently or high quality regret if the algorithm selects $0^{\sf th}$ arm frequently. More specifically, any online algorithm would require $O(1/\epsilon^2)$ samples or rounds to distinguish instance $\cIzero^0$ from instance $\cIzero^a$ for $1\leq a \leq K$. Hence, any policy $\pi$ can avoid high quality regret by \textit{exploring sufficiently} for $O(1/\epsilon^2)$ rounds, incurring a cost regret of $O(1/\epsilon^2)$ or incur zero cost regret at the expense of $O(T\epsilon)$ regret on the reward metric. This suggests a trade-off between $1/\epsilon^2$ and $T\epsilon$, which are of the same magnitude at $\epsilon = T^{-1/3}$ resulting in the aforementioned lower bound. The complete proof generalizes techniques from the standard MAB lower bound proof and is provided in Appendix {\ref{sec:lower_bound_appendix}}.
\qed

Now, we generalize the above result for $\alpha=0$ to any $\alpha$ for $0 \leq \alpha \leq 1$.
The main idea in our reduction is to show that if there exists an algorithm $\pi_\alpha$ for $\alpha > 0$ such that $\qreg_{\pi}(T, \alpha,\phi) + \creg_{\pi}(T, \alpha,\phi)$ is $o( K^{1/3}T^{2/3})$ on every instance in the family $\cIalpha$, then we can use $\pi_\alpha$ as a subroutine to construct an algorithm $\pi$ for problem \eqref{eq:regret} such that $\qreg_{\pi}(T, 0,\phi) + \creg_{\pi}(T, 0,\phi)$ is $o(K^{1/3}T^{2/3})$ on every instance in $\cIzero$, thus contradicting the lower bound of Lemma \ref{lem:alpha_0_neat}. This will prove
Theorem \ref{thm:lower_bound} by contradiction. To construct the aforementioned sub-routine, we leverage techniques from {\it Bernoulli factory} (\cite{keane1994bernoulli, huber2013nearly}) to generate a sample from a Bernoulli random variable with parameter $\mu/(1-\alpha)$ using samples from a Bernoulli random variable with parameter $\mu$, for any $0 <\mu < 1- \alpha <1.$ We provide the exact sub-routine and complete proof in Appendix \ref{sec:lower_bound_appendix}.

\section{Explore-Then-Commit based algorithm}
\label{sec:exploreFirst}
We propose an explore-then-commit algorithm, named \mainALGfull \ (\mainALG), to have better worst-case performance guarantees as compared to the extensions of the TS and UCB algorithms. As the name suggests, first, this algorithm plays each arm for a specified number of rounds. 
After sufficient exploration, the algorithm continues in a UCB-like fashion. In every round, based on the upper and lower confidence bounds on the reward of each arm, a feasible set of arms is constructed as an estimate of all arms having  mean reward greater than the smallest tolerated reward. The lowest cost arm in this feasible set is then pulled. This is detailed in Algorithm \ref{alg:explore_first}.
The key question that arises in this algorithm is how many exploration rounds are needed before exploitation can begin. We establish that $O\left((T/K)^{2/3}\right)$ rounds are sufficient for exploration in the following result (proof in Appendix \ref{sec:alg_performance_appendix}).


\begin{algorithm}
\SetAlgoLined
\KwResult{Arm $I_t$ to be pulled in each round $t \in [T]$ }
\SetKwInOut{Input}{Input}\SetKwInOut{Output}{Output}
\Input{$K, T$, no. of exploration pulls per arm $\tau$}
$T_i(1)= 0 \ \forall i \in [K]$ \\
\textbf{Pure exploration phase:} \\
\For{$t \in [1, K\tau]$}{
    $I_t = t \mod K$\;
    Pull arm $I_t$ to obtain reward $r_t$\;
    $T_{i}(t+1) = T_i(t) + \mathbf{1}\{ I_t =i \} \ \forall i \in [K]$\;
}
\textbf{UCB phase:} \\
\For{$t \in [K\tau+1, T]$}{
    $ \hmu_i(t) \leftarrow \left( \sum_{\tau=1}^{t-1} r_{\tau} \ind \{ I_{\tau} =i \} \right)/ { T_i(t) } \ \forall i \in [K] $\;
    $\beta_i(t) \leftarrow \sqrt{ \left(  2 \log{T} \right) /{T_i(t)}} \ \forall i \in [K]$\;
    $\mu^{\sf UCB}_i(t) \leftarrow \min \{ \hmu_i(t) + \beta_i(t), 1 \} \ \forall i \in [K] $\;
    $\mu^{\sf LCB}_i(t) \leftarrow \max \{ \hmu_i(t)  - \beta_i(t), 0\} \ \forall i \in [K] $\;
    $m_t = \arg \max_{i} \mu^{\sf LCB}_i(t)$\;
    $Feas(t) = \{i: \mu^{\sf UCB}_i(t) \geq (1-\alpha) \mu^{\sf LCB}_{m_t}(t) \}$\;
    $I_t = \arg \min_{i \in Feas(t)} c_i $\;
    Pull arm $I_t$ to obtain reward $r_t$\;
    $T_{i}(t+1) = T_i(t) + \mathbf{1}\{ I_t =i \} \ \forall i \in [K]$\;
}
\caption{\mainALGfull}
\label{alg:explore_first}
\end{algorithm}

\begin{theorem}
\label{thm:explore_first_regret}
For an instance $\phi$ with $K$ arms, when  the number of exploration pulls of each arm $ \tau = (T/K)^{2/3}$, then the sum of cost and quality regret incurred by \mainALG \  (Algorithm \ref{alg:explore_first}) on any instance $\phi$ i.e. $\qreg_{\mainALG}(T, \alpha,  \phi) + \creg_{\mainALG}(T, \alpha, \phi)$ is $O(K^{1/3}T^{2/3}\sqrt{\log{T}})$.
\end{theorem}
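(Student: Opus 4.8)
The plan is to decompose the total regret into the pure exploration phase and the UCB phase, and to control the latter on a high-probability ``clean'' event on which every confidence interval is valid. First I would define
\[
\mathcal{E} = \left\{ \mulcb_i(t) \le \mu_i \le \muucb_i(t) \ \text{ for all } i \in [K],\, t \in [T] \right\}.
\]
Since $\beta_i(t) = \sqrt{2\log T / T_i(t)}$, a Hoeffding bound gives $\Pr[|\hmu_i - \mu_i| > \beta_i(t)] \le 2T^{-4}$ for each fixed sample count, so a union bound over arms and rounds yields $\Pr[\mathcal{E}^c] = O(K/T^3)$; the expected regret incurred on $\mathcal{E}^c$ is then at most $2T\cdot\Pr[\mathcal{E}^c] = O(K/T^2)$, which is negligible. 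During the exploration phase, which lasts $K\tau$ rounds, both the per-round cost and quality regret are bounded by $1$, so this phase contributes at most $K\tau$ to each metric; with $\tau = (T/K)^{2/3}$ this equals $K^{1/3}T^{2/3}$, already of the target order.

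Next I would show that on $\mathcal{E}$ the cost regret in the UCB phase is \emph{identically zero}, by arguing $\optarm \in Feas(t)$ for every $t$. Since $\optarm \in \mathcal{C}_{*}$ we have $\mu_{\optarm} \ge (1-\alpha)\mu_{m_{*}}$, and on $\mathcal{E}$,
\[
\muucb_{\optarm}(t) \ge \mu_{\optarm} \ge (1-\alpha)\mu_{m_{*}} \ge (1-\alpha)\mu_{m_t} \ge (1-\alpha)\mulcb_{m_t}(t),
\]
where the third step uses $\mu_{m_t}\le\mu_{m_{*}}$ and the last uses the clean event. Hence $\optarm$ passes the feasibility test, so $c_{I_t} = \min_{i\in Feas(t)} c_i \le c_{\optarm}$ and $\max\{c_{I_t}-c_{\optarm},0\}=0$. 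Thus all cost regret originates from the exploration phase and the bad event.

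Then I would bound the UCB-phase quality regret on $\mathcal{E}$. Because $I_t \in Feas(t)$ we have $\muucb_{I_t}(t) \ge (1-\alpha)\mulcb_{m_t}(t)$; combining $\mu_{I_t} \ge \muucb_{I_t}(t) - 2\beta_{I_t}(t)$ with $\mulcb_{m_t}(t) \ge \mulcb_{m_{*}}(t) \ge \mu_{m_{*}} - 2\beta_{m_{*}}(t)$ (the first inequality because $m_t$ maximizes the LCB) gives the per-round bound $(1-\alpha)\mu_{m_{*}} - \mu_{I_t} \le 2\beta_{I_t}(t) + 2\beta_{m_{*}}(t)$. Since every arm is pulled at least $\tau$ times in exploration, $\beta_i(t) \le \sqrt{2\log T/\tau}$ throughout the UCB phase, so summing over at most $T$ rounds yields a quality regret of $O\!\big(T\sqrt{\log T/\tau}\big) = O(K^{1/3}T^{2/3}\sqrt{\log T})$ for the chosen $\tau$. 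Adding the four pieces --- exploration ($O(K^{1/3}T^{2/3})$ per metric), UCB-phase cost regret ($0$ on $\mathcal{E}$), UCB-phase quality regret ($O(K^{1/3}T^{2/3}\sqrt{\log T})$), and the bad-event term ($O(K/T^2)$) --- proves the claim for $\qreg_{\mainALG} + \creg_{\mainALG}$.

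I expect the main obstacle to be conceptual rather than computational: verifying that the precise pairing in the feasibility test --- an \emph{upper} confidence bound on the candidate arm against a \emph{lower} confidence bound on the estimated best arm --- simultaneously guarantees that $\optarm$ is never wrongly excluded (so cost regret vanishes on $\mathcal{E}$) while no persistently low-quality arm can enter the feasible set (so quality regret stays small). This is exactly the balancing of the ``two asymmetric objectives'' flagged in the introduction, and getting the $(1-\alpha)$ factors and the direction of each confidence bound correct in the two inequality chains is the delicate part.
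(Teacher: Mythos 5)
Your proposal is correct and takes essentially the same route as the paper's own proof: the same clean event, the same $K\tau$ bound on both metrics during exploration, the identical inequality chain $\muucb_{\optarm}(t) \geq \mu_{\optarm} \geq (1-\alpha)\mu_{m_*} \geq (1-\alpha)\mu_{m_t} \geq (1-\alpha)\mulcb_{m_t}(t)$ showing $\optarm \in Feas(t)$ (hence zero UCB-phase cost regret), the same per-round quality bound $2\beta_{I_t}(t) + 2\beta_{m_*}(t) \leq 4\sqrt{2\log T/\tau}$, and the same substitution $\tau = (T/K)^{2/3}$. The only cosmetic difference is that you bound $\Pr[\mathcal{E}^c]$ directly by Hoeffding plus a union bound, whereas the paper cites a standard lemma giving probability at most $2/T^2$; either version makes the bad-event contribution negligible.
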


The key reason that sufficient exploration is needed for our problem is that there can be arms with mean rewards very close to each other but significantly different costs. If cost regret were not of concern, then playing either arm would have led to satisfactory performance by giving low quality regret. The need to perform well on both cost and quality regrets necessitates differentiating between the two arms and finding the one with the cheapest cost among the arms with mean reward above the smallest tolerated reward. 

The regret guarantee mainly stems from the exploration phase of the algorithm. In fact, an algorithm that estimates the optimal arm only once after the exploration phase and pulls that arm for the remaining time will have the same regret upper bound as \mainALG. But we empirically observed that the non-asymptotic performance of this algorithm is worse as compared to Algorithm \ref{alg:explore_first}.

\section{Performance with Constraints on Costs and Rewards}
\label{sec:specialCases}
In this section, we present some extensions of the previous results.

\subsection{Consistent Cost and Quality}
\label{sec:cost_quality_consistent}

The lower bound result in Theorem \ref{thm:lower_bound} is motivated by an extreme instance where arms with very similar mean rewards have very different costs. This raises the following question - can better-performing algorithms be obtained if the rewards and costs are \textit{consistent} with each other? We show that this is indeed the case. Motivated by the instance which led to the worst-case performance, we consider a constraint that gives an upper bound on the difference in costs of every pair of arms by a multiple of the difference in the qualities of these arms. Under this constraint, \consUCB \ has good performance as per the following result with the proof in Appendix \ref{sec:alg_performance_appendix}.

\begin{theorem}
\label{thm:cost_quality_consistent}
If for an instance $\phi$ with $K$ arms,  $|c_i - c_j| \leq \delta |\mu_i - \mu_j| \ \forall i, j \in [K]$ and any (possibly unknown) $\delta >0$, then
$\qreg_{\consUCB}(T, \alpha,  \phi) + \creg_{\consUCB}(T, \alpha, \phi)$ is $O((1+\delta) \sqrt{KT \log{T}})$. 
\end{theorem}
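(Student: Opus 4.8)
The plan is to run the standard optimism-based regret decomposition, treating the two regret terms separately and using the consistency hypothesis only to convert quality gaps into cost gaps. First I would define the clean event $\mathcal{E}$ on which every empirical mean concentrates, i.e. $|\hmu_i(t)-\mu_i|\le\beta_i(t)$ for all arms $i$ and all rounds $t$. Since $\beta_i(t)=\sqrt{2\log T/T_i(t)}$, Hoeffding together with a union bound over the at most $KT$ relevant pairs gives $\prob[\mathcal{E}^c]=O(K/T)$, so the contribution of $\mathcal{E}^c$ to either regret is $O(1)$; the $K$ forced pulls of the initial phase contribute at most $O(K)\le O(\sqrt{KT})$ to each regret and are absorbed into the final bound. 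Conditioning on $\mathcal{E}$, I would use that the score satisfies $\muucb_i(t)\ge\mu_i$ and $\muucb_i(t)\le\mu_i+2\beta_i(t)$ for every $i,t$.

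For the quality regret I would use optimism directly, and note that this part needs no consistency hypothesis. Because $m_t$ maximizes the score and $I_t\in Feas(t)$, on $\mathcal{E}$ we have $\muucb_{I_t}(t)\ge(1-\alpha)\muucb_{m_t}(t)\ge(1-\alpha)\muucb_{m_*}(t)\ge(1-\alpha)\mu_{m_*}$; combined with $\muucb_{I_t}(t)\le\mu_{I_t}+2\beta_{I_t}(t)$ this gives the per-round bound $\max\{(1-\alpha)\mu_{m_*}-\mu_{I_t},0\}\le 2\beta_{I_t}(t)$. Summing and applying the usual pigeonhole/Cauchy--Schwarz estimate $\sum_{t}1/\sqrt{T_{I_t}(t)}\le\sum_i\sum_{n\le T_i(T)}n^{-1/2}\le 2\sqrt{KT}$ yields $\qreg_{\consUCB}=O(\sqrt{KT\log T})$.

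The cost regret is the crux and is where consistency enters. The starting observation is that a round can contribute positive cost regret only if $\optarm\notin Feas(t)$, since otherwise $I_t=\arg\min_{i\in Feas(t)}c_i$ would satisfy $c_{I_t}\le c_{\optarm}$. For such a round I would invoke the hypothesis to write $c_{I_t}-c_{\optarm}\le\delta\,|\mu_{I_t}-\mu_{\optarm}|$, reducing everything to showing that whenever the algorithm overpays relative to $\optarm$, the played arm is close to $\optarm$ in quality on the scale of a confidence width. One direction, $\mu_{\optarm}-\mu_{I_t}\le 2\beta_{I_t}(t)$, is immediate from $I_t\in Feas(t)$, $\optarm\notin Feas(t)$, and the resulting inequality $\muucb_{I_t}(t)>\muucb_{\optarm}(t)\ge\mu_{\optarm}$. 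For the reverse direction I would argue that $\optarm\notin Feas(t)$ forces the score-maximizing arm to be under-explored—its bonus must exceed the slack $\mu_{\optarm}-(1-\alpha)\mu_{m_*}$—charge the quality gap of such rounds to that arm's confidence width, and control their number and total weight by a UCB-style counting argument; summing again reduces via Cauchy--Schwarz to $\sum_t\beta_{\cdot}(t)=O(\sqrt{KT\log T})$, giving $\creg_{\consUCB}=O(\delta\sqrt{KT\log T})$.

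The main obstacle is exactly this upper bound on $\mu_{I_t}-\mu_{\optarm}$: unlike in a single-objective UCB analysis the played arm is the cheapest feasible arm rather than an optimistic arm, so its quality is not automatically pinned to the best arm, and in a single round the quality gap to $\optarm$ can be large. The argument must therefore establish that such rounds are rare and attributable to insufficient exploration of the arm that sets the feasibility threshold, and it is here that the consistency hypothesis—tying cost differences to quality differences—prevents expensive but close-quality arms from accumulating more than $O(\delta\sqrt{KT\log T})$ total cost regret. Adding the two regret bounds together with the lower-order $O((1+\delta)\sqrt{KT})$ terms yields the claimed $O((1+\delta)\sqrt{KT\log T})$.
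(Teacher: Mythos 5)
Your proposal follows the same skeleton as the paper's proof: the clean event $\mathcal{E}=\{|\hmu_i(t)-\mu_i|\le\beta_i(t)\ \forall i,t\}$, optimism for the quality regret, and for the cost regret the reduction to rounds with $\optarm\notin Feas(t)$ followed by the consistency hypothesis to convert quality gaps into cost gaps. Your quality-regret argument is correct and is only a cosmetic variant of the paper's: you bound the per-round regret by $2\beta_{I_t}(t)$ and sum via Cauchy--Schwarz, whereas the paper bounds $T_i(T)\le 8\log T/\Delta_{\mu,i}^2$ per arm (at the last time $i\in Feas(t)$) and finishes with Jensen; both give $O(\sqrt{KT\log T})$.

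The cost-regret part is where your proposal is incomplete, and you have put your finger on exactly the right spot. From $I_t\in Feas(t)$ and $\optarm\notin Feas(t)$ one gets $\muucb_{I_t}(t)>\muucb_{\optarm}(t)\ge\mu_{\optarm}$, hence $\mu_{\optarm}-\mu_{I_t}\le 2\beta_{I_t}(t)$ --- but consistency needs $|\mu_{I_t}-\mu_{\optarm}|$ small, and for an expensive arm with $\mu_{I_t}>\mu_{\optarm}$ (e.g.\ $I_t=m_*$) nothing controls the other direction. Your sketched patch --- that $\optarm\notin Feas(t)$ forces $\beta_{m_t}(t)>\bigl(\mu_{\optarm}-(1-\alpha)\mu_{m_*}\bigr)/\bigl(2(1-\alpha)\bigr)$, so such rounds can be charged to the threshold-setting arm --- collapses precisely when the slack $\mu_{\optarm}-(1-\alpha)\mu_{m_*}$ is zero or of order $1/\sqrt{T}$: the bonus lower bound is then vacuous, and moreover $m_t$ is not the arm pulled on those rounds, so its width need not shrink and the UCB pull-counting argument does not apply to it. This does not look repairable: take two arms with $c_1=0$, $c_2=1$, $\mu_2=1/2$, $\mu_1=(1-\alpha)/2$, which satisfies the hypothesis with $\delta=2/\alpha$; on the clean event \consUCB\ equilibrates at $\beta_1(t)\approx(1-\alpha)\beta_2(t)$, i.e.\ $T_2(t)\approx(1-\alpha)^2T_1(t)$, so the expensive arm is pulled a constant fraction of all rounds and the cost regret is $\Theta(T)$. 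This is essentially the paper's own Section~2 instance, on which Figure~1 empirically shows \consUCB\ incurring large cost regret.

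You should also know that the paper's proof steps over the same hole rather than filling it: it derives $\mu_{\optarm}-\mu_i\le 2\beta_i(\tdt_i)$ at arm $i$'s last pull and then squares to $T_i(T)\le 8\log T/(\mu_{\optarm}-\mu_i)^2$, a step that is valid only when $\mu_i<\mu_{\optarm}$; for arms with $\mu_i\ge\mu_{\optarm}$ and $c_i>c_{\optarm}$ the left side of the inequality is nonpositive and no bound on $T_i(T)$ follows. So your attempt covers exactly the cases the paper's proof legitimately covers, and differs mainly in that you flagged the obstacle explicitly instead of squaring past it; but neither argument as written establishes the stated cost-regret bound, and the instance above indicates the theorem needs an additional assumption (e.g.\ a strictly positive slack $\mu_{\optarm}-(1-\alpha)\mu_{m_*}$, with the bound depending on it) for the cost part to go through.
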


Note that, in general, $\delta$ can be unknown. Hence, even with the above assumption on the consistency of cost and quality, a priori any algorithm cannot get a bound on the quality difference between arms, only by virtue of knowing their costs.
 
\subsection{Unknown Costs}
\label{sec:unknown_costs}
In some applications, the costs of the arms may also be unknown and random. Hence, in addition to the mean reward, the mean costs also need to be estimated. Without loss of generality, we assume that the distribution of the random cost of each arm has support [0,1]. 
Not knowing the arm's cost does not fundamentally change the regret minimization problem we have discussed in the above sections. Clearly, the lower bound result is still valid. 
Algorithm \ref{alg:explore_first} can be generalized to the unknown costs setting with a minor modification in the UCB phase of the algorithm. The modified UCB phase is described in Algorithm \ref{alg:explore_first_unknown_cost}. 
In this algorithm, we maintain confidence bounds on the costs of each arm. Instead of picking the arm with the lowest cost among all feasible arms, the algorithm now picks the arm with the lowest lower confidence bound on cost. Theorem \ref{thm:explore_first_regret} holds for this modified algorithm also. 

Similarly, when costs and quality are consistent as described in Section \ref{sec:cost_quality_consistent}, the \consUCB \ algorithm can be modified to pick the arm with the lowest lower confidence bound on cost and Theorem \ref{thm:cost_quality_consistent} holds.

\begin{algorithm}
\SetAlgoLined
\KwResult{Arm $I_t$ to be pulled in each round $t \in [T]$ }
\SetKwInOut{Input}{input}\SetKwInOut{Output}{output}
\Input{$K, T, $ Number of exploration pulls $\tau$}
$T_i(1)= 0 \ \forall i \in [K]$ \\
\textbf{Pure exploration phase:} \\
\For{$t \in [1, Kf(K, T)]$}{
    $I_t = t \mod K$\;
    Pull arm $I_t$ to obtain reward $r_t$ and cost $ \chi_t$\;
    $T_{i}(t+1) = T_i(t) + \mathbf{1}\{ I_t =i \} \ \forall i \in [K]$\;
}
\textbf{UCB phase:} \\
\For{$t \in [Kf(K, T)+1, T]$}{
    $ \hmu_i(t) \leftarrow \frac{ \sum_{\tau=1}^{t-1} r_{\tau} \ind \{ I_{\tau} =i \}} { T_i(t) } \ \forall i \in [K] $\;
    $ \hc_i(t) \leftarrow \frac{ \sum_{\tau=1}^{t-1} \chi_{\tau} \ind \{ I_{\tau} =i \}} { T_i(t) } \ \forall i \in [K] $\;
    $\beta_i(t) \leftarrow \sqrt{\frac{ 2 \log{T} }{T_i(t)}} \ \forall i \in [K]$\;
    $\mu^{\sf UCB}_i(t) \leftarrow \min \{ \hmu_i(t) + \beta_i(t), 1 \} \ \forall i \in [K] $\;
    $\mu^{\sf LCB}_i(t) \leftarrow \max \{ \hmu_i(t)  - \beta_i(t), 0\} \ \forall i \in [K] $\;
    $c^{\sf LCB}_i(t) \leftarrow \max \{ \hc_i(t)  - \beta_i(t), 0\} \ \forall i \in [K] $\;
    $m_t = \arg \max_{i} \mu^{\sf LCB}_i(t)$\;
    $Feas(t) = \{i: \mu^{\sf UCB}_i(t)> (1-\alpha) \mu^{\sf LCB}_{m_t}(t) \}$\;
    $I_t = \arg \min_{i \in Feas(t)} c^{\sf LCB}_i $\;
    Pull arm $I_t$ to obtain reward $r_t$ and cost $ \chi_t$\;
    $T_{i}(t+1) = T_i(t) + \mathbf{1}\{ I_t =i \} \ \forall i \in [K]$\;
}
\caption{\mainALG \ with Unknown Costs}
\label{alg:explore_first_unknown_cost}
\end{algorithm}

\section{Numerical Experiments}
\label{sec:experiments}
In the previous sections, we have shown theoretical results on the worst-case performance of different algorithms for (\ref{eq:regret}). Now, we illustrate the empirical performance of these algorithms. We shed light on which algorithm performs better in what regime of parameter values.
The key quantity that differentiates the performance of different algorithms is how close the mean rewards of different arms are. We consider a setting with two Bernoulli arms and vary the mean reward of one arm (the cheaper arm) while keeping the other quantities (reward distribution of the other arm and costs of both arms) fixed. The values of these parameters are described in Table \ref{tab:param_values}. The reward in each round follows a Bernoulli distribution, whereas the cost is a known fixed value. The cost and quality regret at time $T$ of the different algorithms are plotted in Figure \ref{fig:varying_qual_perf}.

\begin{table}[]
    \centering
    \begin{tabular}{ |c|c| } \hline
    \textbf{ Parameter} & \textbf{ Value} \\ \hline
    Mean reward of arm 1 $(\mu_{1})$ & 0.5 \\ \hline
    Mean reward of arm 2 $(\mu_{2})$ & 0.3-0.6 \\ \hline
    Cost of arm 1 $(c_1)$ & 1 \\  \hline
    Cost of arm 2 $(c_2)$ & 0 \\ \hline
    Subsidy factor $(\alpha)$ & 0.1 \\ \hline
    Time horizon $(T)$ & 5000 \\ \hline
    \end{tabular}
    \caption{Parameter values}
    \label{tab:param_values}
\end{table}

We observe that the performance of \consTS \ and \consUCB \ are close to each other for the entire range of mean reward values. To compare these algorithms' performance with \mainALG, we focus on how close the mean reward of the lower mean reward arm is to the smallest tolerated reward. When $\mu_2 \leq 0.5$ ($\mu_2 > 0.5$), the lowest tolerated reward is 0.45 ($0.9 \mu_2$). In terms of quality regret, when $\mu_2$ is much smaller than 0.45, \consTS \ and \consUCB \ perform much better than \mainALG. This is because the number of exploration rounds in the \mainALG \ algorithm is fixed (independent of the difference in mean rewards of the two arms), leading to higher quality regret when $\mu_2$ is much smaller than 0.45. On the other hand, because of the large difference in $\mu_2$ and 0.45, \consTS \ and \consUCB \ algorithms can easily find the optimal arm and incur low quality regret. The cost regret of all algorithms is 0 because the optimal arm is the expensive arm. 

When $\mu_2$ is close to (and less than) 0.45, \consTS \ and \consUCB \ incur much higher cost regret as compared to \mainALG. This is in line with the intuition established in Section \ref{sec:ts_linear_regret}. Here, \consTS \ and \consUCB \ are unable to effectively conclude that the second (cheaper) arm is optimal. Thus, they pull the first (expensive) arm many times, leading to high cost regret. On the other hand, \mainALG,   after the exploration rounds, can correctly identify the second arm as the optimal arm. 

\begin{figure}
    \centering
    \includegraphics[width=0.35\textwidth]{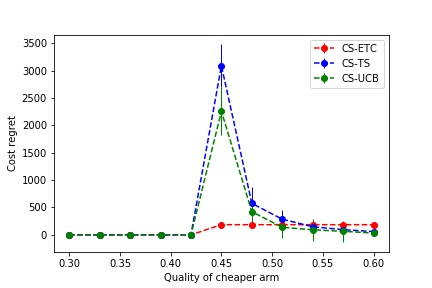}
     \includegraphics[width=0.35\textwidth]{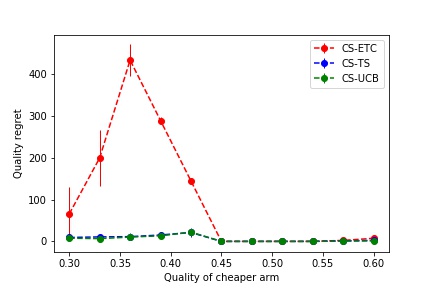}
    \caption{Performance of algorithms with varying mean reward of the cheaper arm. The length of the error bars correspond to two standard deviations in regret obtained by running the experiment 50 times.}
    \label{fig:varying_qual_perf}
\end{figure}

Thus, we recommend using the \consTS /\consUCB \ algorithm when the mean rewards of arms are well-differentiated and \mainALG \  when the mean rewards are close to one another (as is often the case in the SMS application). This is in line with the notion that algorithms that perform well in the worst case might not have good performance for an average case.


\subsection{Conclusion and Future Work}
In this paper, we have proposed a new variant of the \MAB  problem, which factors costs associated with playing an arm and introduces new metrics that uniquely capture the features of multiple real-world applications. We argue about the \textit{hardness} of this problem by establishing fundamental limits on the performance of any online algorithm and also demonstrating that traditional \MAB  algorithms perform poorly from both a theoretical and empirical standpoint. We present a simple near-optimal algorithm, and through numerical simulations, we prescribe an ideal algorithmic choice for different problem regimes. 

An interesting direction for future work is defining a notion of instance-wise optimality and devising algorithms that achieve this. Insights from this can also help improve the \mainALG \ algorithm to obtain better empirical performance.
Another important question that naturally arises from this work is developing an algorithm for the adversarial variant of the \MAB  with cost subsidy problem. In particular, it is not immediately clear if EXP3 (\cite{auer2002nonstochastic}) family of algorithms, which are popular for non-stochastic \MAB problem, can be generalized to settings where the reward distribution is not stationary.

\subsubsection*{Acknowledgements}
We want to thank Nicolas Stier-Moses for introducing us to the SMS routing application and helping us in formulating it as an \MAB problem. We would also like to thank three anonymous reviewers for their detailed feedback, which sharpened our arguments.

\bibliographystyle{abbrvnat}
\bibliography{references}

\begin{thebibliography}{52}
\providecommand{\natexlab}[1]{#1}
\providecommand{\url}[1]{\texttt{#1}}
\expandafter\ifx\csname urlstyle\endcsname\relax
  \providecommand{\doi}[1]{doi: #1}\else
  \providecommand{\doi}{doi: \begingroup \urlstyle{rm}\Url}\fi

\bibitem[Abramowitz and Stegun(1948)]{abramowitz1948handbook}
M.~Abramowitz and I.~A. Stegun.
\newblock \emph{Handbook of mathematical functions with formulas, graphs, and
  mathematical tables}, volume~55.
\newblock US Government printing office, 1948.

\bibitem[Agrawal and Devanur(2014)]{AD14}
S.~Agrawal and N.~R. Devanur.
\newblock Bandits with concave rewards and convex knapsacks.
\newblock In \emph{Proceedings of the Fifteenth ACM Conference on Economics and
  Computation}, EC ’14, page 989–1006, New York, NY, USA, 2014. Association
  for Computing Machinery.
\newblock ISBN 9781450325653.

\bibitem[Agrawal and Goyal(2017{\natexlab{a}})]{AgrawalTS_nearopt}
S.~Agrawal and N.~Goyal.
\newblock Near-optimal regret bounds for thompson sampling.
\newblock \emph{J. ACM}, 64\penalty0 (5), 2017{\natexlab{a}}.

\bibitem[Agrawal and Goyal(2017{\natexlab{b}})]{agrawal2017near}
S.~Agrawal and N.~Goyal.
\newblock Near-optimal regret bounds for thompson sampling.
\newblock \emph{Journal of the ACM (JACM)}, 64\penalty0 (5):\penalty0 1--24,
  2017{\natexlab{b}}.

\bibitem[Agrawal et~al.(2016)Agrawal, Avadhanula, Goyal, and
  Zeevi]{agrawalnear}
S.~Agrawal, V.~Avadhanula, V.~Goyal, and A.~Zeevi.
\newblock A near-optimal exploration-exploitation approach for assortment
  selection.
\newblock \emph{Proceedings of the 2016 ACM Conference on Economics and
  Computation (EC)}, pages 599--600, 2016.

\bibitem[Agrawal et~al.(2017)Agrawal, Avadhanula, Goyal, and
  Zeevi]{agrawal2017thompson}
S.~Agrawal, V.~Avadhanula, V.~Goyal, and A.~Zeevi.
\newblock Thompson sampling for the mnl-bandit.
\newblock In \emph{Conference on Learning Theory}, pages 76--78, 2017.

\bibitem[Amani et~al.(2020)Amani, Alizadeh, and
  Thrampoulidis]{amani2020generalized}
S.~Amani, M.~Alizadeh, and C.~Thrampoulidis.
\newblock Generalized linear bandits with safety constraints.
\newblock In \emph{ICASSP 2020-2020 IEEE International Conference on Acoustics,
  Speech and Signal Processing (ICASSP)}, pages 3562--3566. IEEE, 2020.

\bibitem[Amazon(2019)]{Amazon}
Amazon.
\newblock Amazon auto-targeting, 2019.
\newblock URL \url{https://tinyurl.com/yx9lyfwq}.

\bibitem[Anandkumar et~al.(2011)Anandkumar, Michael, Tang, and
  Swami]{anandkumar2011distributed}
A.~Anandkumar, N.~Michael, A.~K. Tang, and A.~Swami.
\newblock Distributed algorithms for learning and cognitive medium access with
  logarithmic regret.
\newblock \emph{IEEE Journal on Selected Areas in Communications}, 29\penalty0
  (4):\penalty0 731--745, 2011.

\bibitem[Auer et~al.(2002{\natexlab{a}})Auer, Cesa-Bianchi, Freund, and
  Schapire]{Auer2002}
P.~Auer, N.~Cesa-Bianchi, Y.~Freund, and R.~E. Schapire.
\newblock The nonstochastic multiarmed bandit problem.
\newblock \emph{SIAM Journal on Computing}, 32\penalty0 (1):\penalty0 48--77,
  2002{\natexlab{a}}.

\bibitem[Auer et~al.(2002{\natexlab{b}})Auer, Cesa-Bianchi, Freund, and
  Schapire]{auer2002nonstochastic}
P.~Auer, N.~Cesa-Bianchi, Y.~Freund, and R.~E. Schapire.
\newblock The nonstochastic multiarmed bandit problem.
\newblock \emph{SIAM journal on computing}, 32\penalty0 (1):\penalty0 48--77,
  2002{\natexlab{b}}.

\bibitem[Badanidiyuru et~al.(2018)Badanidiyuru, Kleinberg, and
  Slivkins]{Badanidiyuru:2018}
A.~Badanidiyuru, R.~Kleinberg, and A.~Slivkins.
\newblock Bandits with knapsacks.
\newblock \emph{J. ACM}, 65\penalty0 (3):\penalty0 13:1--13:55, Mar. 2018.
\newblock ISSN 0004-5411.
\newblock \doi{10.1145/3164539}.
\newblock URL \url{http://doi.acm.org/10.1145/3164539}.

\bibitem[Bubeck and Cesa-Bianchi(2012)]{MAL-024}
S.~Bubeck and N.~Cesa-Bianchi.
\newblock Regret analysis of stochastic and nonstochastic multi-armed bandit
  problems.
\newblock \emph{Foundations and Trends® in Machine Learning}, 5\penalty0
  (1):\penalty0 1--122, 2012.

\bibitem[Canlas et~al.(2010)Canlas, Cruz, Dimarucut, Uyengco, Tangonan, Guico,
  Libatique, and Pineda]{canlas2010quantitative}
M.~Canlas, K.~P. Cruz, M.~K. Dimarucut, P.~Uyengco, G.~Tangonan, M.~L. Guico,
  N.~Libatique, and C.~Pineda.
\newblock A quantitative analysis of the quality of service of short message
  service in the philippines.
\newblock In \emph{2010 IEEE International Conference on Communication
  Systems}, pages 710--714. IEEE, 2010.

\bibitem[Cao et~al.(2015)Cao, Li, Tao, and Li]{cao2015top}
W.~Cao, J.~Li, Y.~Tao, and Z.~Li.
\newblock On top-k selection in multi-armed bandits and hidden bipartite
  graphs.
\newblock In \emph{Advances in Neural Information Processing Systems}, pages
  1036--1044, 2015.

\bibitem[Cesa-Bianchi et~al.(2014)Cesa-Bianchi, Gentile, and
  Mansour]{cesa2014regret}
N.~Cesa-Bianchi, C.~Gentile, and Y.~Mansour.
\newblock Regret minimization for reserve prices in second-price auctions.
\newblock \emph{IEEE Transactions on Information Theory}, 61\penalty0
  (1):\penalty0 549--564, 2014.

\bibitem[Chen et~al.(2016)Chen, Gupta, and Li]{chen2016pure}
L.~Chen, A.~Gupta, and J.~Li.
\newblock Pure exploration of multi-armed bandit under matroid constraints.
\newblock In \emph{Conference on Learning Theory}, pages 647--669, 2016.

\bibitem[Chen et~al.(2014)Chen, Lin, King, Lyu, and
  Chen]{chen2014combinatorial}
S.~Chen, T.~Lin, I.~King, M.~R. Lyu, and W.~Chen.
\newblock Combinatorial pure exploration of multi-armed bandits.
\newblock In \emph{Advances in Neural Information Processing Systems}, pages
  379--387, 2014.

\bibitem[Daulton et~al.(2019)Daulton, Singh, Avadhanula, Dimmery, and
  Bakshy]{daulton2019thompson}
S.~Daulton, S.~Singh, V.~Avadhanula, D.~Dimmery, and E.~Bakshy.
\newblock Thompson sampling for contextual bandit problems with auxiliary
  safety constraints.
\newblock \emph{arXiv preprint arXiv:1911.00638}, 2019.

\bibitem[Drugan and Nowe(2013)]{drugan2013designing}
M.~M. Drugan and A.~Nowe.
\newblock Designing multi-objective multi-armed bandits algorithms: A study.
\newblock In \emph{The 2013 International Joint Conference on Neural Networks
  (IJCNN)}, pages 1--8. IEEE, 2013.

\bibitem[Facebook(2016)]{Facebook}
Facebook.
\newblock Facebook targeting expansion, 2016.
\newblock URL \url{https://tinyurl.com/y3ss2j8g}.

\bibitem[Galichet et~al.(2013)Galichet, Sebag, and
  Teytaud]{galichet2013exploration}
N.~Galichet, M.~Sebag, and O.~Teytaud.
\newblock Exploration vs exploitation vs safety: Risk-aware multi-armed
  bandits.
\newblock In \emph{Asian Conference on Machine Learning}, pages 245--260, 2013.

\bibitem[Google(2014)]{Google}
Google.
\newblock Google auto-targeting, 2014.
\newblock URL \url{https://tinyurl.com/y3c4bdaj}.

\bibitem[Huber(2013)]{huber2013nearly}
M.~Huber.
\newblock Nearly optimal bernoulli factories for linear functions.
\newblock \emph{arXiv preprint arXiv:1308.1562}, 2013.

\bibitem[{Immorlica} et~al.(2019){Immorlica}, {Sankararaman}, {Schapire}, and
  {Slivkins}]{Immorlica19}
N.~{Immorlica}, K.~A. {Sankararaman}, R.~{Schapire}, and A.~{Slivkins}.
\newblock Adversarial bandits with knapsacks.
\newblock In \emph{2019 IEEE 60th Annual Symposium on Foundations of Computer
  Science (FOCS)}, pages 202--219, 2019.

\bibitem[Jamieson and Nowak(2014)]{jamieson2014best}
K.~Jamieson and R.~Nowak.
\newblock Best-arm identification algorithms for multi-armed bandits in the
  fixed confidence setting.
\newblock In \emph{2014 48th Annual Conference on Information Sciences and
  Systems (CISS)}, pages 1--6. IEEE, 2014.

\bibitem[Katz-Samuels and Scott(2019)]{katz2019top}
J.~Katz-Samuels and C.~Scott.
\newblock Top feasible arm identification.
\newblock In \emph{The 22nd International Conference on Artificial Intelligence
  and Statistics}, pages 1593--1601, 2019.

\bibitem[Kazerouni et~al.(2017)Kazerouni, Ghavamzadeh, Yadkori, and
  Van~Roy]{kazerouni2017conservative}
A.~Kazerouni, M.~Ghavamzadeh, Y.~A. Yadkori, and B.~Van~Roy.
\newblock Conservative contextual linear bandits.
\newblock In \emph{Advances in Neural Information Processing Systems}, pages
  3910--3919, 2017.

\bibitem[Keane and O'Brien(1994)]{keane1994bernoulli}
M.~Keane and G.~L. O'Brien.
\newblock A bernoulli factory.
\newblock \emph{ACM Transactions on Modeling and Computer Simulation (TOMACS)},
  4\penalty0 (2):\penalty0 213--219, 1994.

\bibitem[Koningstein(2006)]{koningstein2006suggesting}
R.~Koningstein.
\newblock Suggesting and/or providing targeting information for advertisements,
  July~6 2006.
\newblock US Patent App. 11/026,508.

\bibitem[Langford and Zhang(2008)]{langford2008epoch}
J.~Langford and T.~Zhang.
\newblock The epoch-greedy algorithm for multi-armed bandits with side
  information.
\newblock In \emph{Advances in neural information processing systems}, pages
  817--824, 2008.

\bibitem[Li et~al.(2011)Li, Chu, Langford, and Wang]{li2011unbiased}
L.~Li, W.~Chu, J.~Langford, and X.~Wang.
\newblock Unbiased offline evaluation of contextual-bandit-based news article
  recommendation algorithms.
\newblock In \emph{Proceedings of the fourth ACM international conference on
  Web search and data mining}, pages 297--306, 2011.

\bibitem[Li et~al.(2015)Li, Chen, Kleban, and Gupta]{li2015counterfactual}
L.~Li, S.~Chen, J.~Kleban, and A.~Gupta.
\newblock Counterfactual estimation and optimization of click metrics in search
  engines: A case study.
\newblock In \emph{Proceedings of the 24th International Conference on World
  Wide Web}, pages 929--934, 2015.

\bibitem[MarketWatch(2020)]{MarketWatch}
MarketWatch.
\newblock Marketwatch a2p report, 2020.
\newblock URL \url{https://rb.gy/0w96oi}.

\bibitem[May et~al.(2012)May, Korda, Lee, and Leslie]{may2012optimistic}
B.~C. May, N.~Korda, A.~Lee, and D.~S. Leslie.
\newblock Optimistic bayesian sampling in contextual-bandit problems.
\newblock \emph{Journal of Machine Learning Research}, (13):\penalty0
  2069--2106, 2012.

\bibitem[Meng et~al.(2007)Meng, Zerfos, Samanta, Wong, and
  Lu]{meng2007analysis}
X.~Meng, P.~Zerfos, V.~Samanta, S.~H. Wong, and S.~Lu.
\newblock Analysis of the reliability of a nationwide short message service.
\newblock In \emph{IEEE INFOCOM 2007-26th IEEE International Conference on
  Computer Communications}, pages 1811--1819. IEEE, 2007.

\bibitem[Oliver and Li(2011)]{chapelle}
C.~Oliver and L.~Li.
\newblock An empirical evaluation of thompson sampling.
\newblock \emph{In Advances in Neural Information Processing Systems (NIPS)},
  24:\penalty0 2249?2257, 2011.

\bibitem[Osunade and Nurudeen()]{osunaderoute}
O.~Osunade and S.~O. Nurudeen.
\newblock Route optimization for delivery of short message service in
  telecommunication networks.

\bibitem[Ovum(2017)]{Ovum}
Ovum.
\newblock Sustaining a2p sms growth while securing mobile network, 2017.
\newblock URL \url{https://rb.gy/qqonzd}.

\bibitem[Paria et~al.(2018)Paria, Kandasamy, and P{\'o}czos]{paria2018flexible}
B.~Paria, K.~Kandasamy, and B.~P{\'o}czos.
\newblock A flexible framework for multi-objective bayesian optimization using
  random scalarizations.
\newblock \emph{arXiv preprint arXiv:1805.12168}, 2018.

\bibitem[Robbins(1952)]{robbins1952some}
H.~Robbins.
\newblock Some aspects of the sequential design of experiments.
\newblock \emph{Bulletin of the American Mathematical Society}, 58\penalty0
  (5):\penalty0 527--535, 1952.

\bibitem[Sankararaman et~al.(2019)Sankararaman, Ganesh, and
  Shakkottai]{sankararaman2019social}
A.~Sankararaman, A.~Ganesh, and S.~Shakkottai.
\newblock Social learning in multi agent multi armed bandits.
\newblock \emph{Proceedings of the ACM on Measurement and Analysis of Computing
  Systems}, 3\penalty0 (3):\penalty0 1--35, 2019.

\bibitem[Scott(2010)]{scott2010modern}
S.~L. Scott.
\newblock A modern bayesian look at the multi-armed bandit.
\newblock \emph{Applied Stochastic Models in Business and Industry},
  26\penalty0 (6):\penalty0 639--658, 2010.

\bibitem[Slivkins(2019)]{MAL-068}
A.~Slivkins.
\newblock Introduction to multi-armed bandits.
\newblock \emph{Foundations and Trends® in Machine Learning}, 12\penalty0
  (1-2):\penalty0 1--286, 2019.
\newblock ISSN 1935-8237.
\newblock URL \url{http://dx.doi.org/10.1561/2200000068}.

\bibitem[Slivkins and Vaughan(2014)]{slivkins2014online}
A.~Slivkins and J.~W. Vaughan.
\newblock Online decision making in crowdsourcing markets: Theoretical
  challenges.
\newblock \emph{ACM SIGecom Exchanges}, 12\penalty0 (2):\penalty0 4--23, 2014.

\bibitem[Thompson(1933)]{thompson}
W.~Thompson.
\newblock On the likelihood that one unknown probability exceeds another in
  view of the evidence of two samples.
\newblock \emph{Biometrika}, 25\penalty0 (3/4):\penalty0 285--294, 1933.

\bibitem[Tran-Thanh et~al.(2012)Tran-Thanh, Chapman, Rogers, and
  Jennings]{tran2012knapsack}
L.~Tran-Thanh, A.~Chapman, A.~Rogers, and N.~R. Jennings.
\newblock Knapsack based optimal policies for budget--limited multi--armed
  bandits.
\newblock In \emph{Twenty-Sixth AAAI Conference on Artificial Intelligence},
  2012.

\bibitem[Twilio and Uber(2020)]{twiliober}
Twilio and Uber.
\newblock Uber built a great ridesharing experience with sms \& voice, 2020.
\newblock URL \url{https://customers.twilio.com/208/uber/}.

\bibitem[Wu et~al.(2016)Wu, Shariff, Lattimore, and
  Szepesv{\'a}ri]{wu2016conservative}
Y.~Wu, R.~Shariff, T.~Lattimore, and C.~Szepesv{\'a}ri.
\newblock Conservative bandits.
\newblock In \emph{International Conference on Machine Learning}, pages
  1254--1262, 2016.

\bibitem[Yahyaa and Manderick(2015)]{yahyaa2015thompson}
S.~Yahyaa and B.~Manderick.
\newblock Thompson sampling for multi-objective multi-armed bandits problem.
\newblock In \emph{Proceedings}, page~47. Presses universitaires de Louvain,
  2015.

\bibitem[Yahyaa et~al.(2014)Yahyaa, Drugan, and Manderick]{yahyaa2014annealing}
S.~Q. Yahyaa, M.~M. Drugan, and B.~Manderick.
\newblock Annealing-pareto multi-objective multi-armed bandit algorithm.
\newblock In \emph{2014 IEEE Symposium on Adaptive Dynamic Programming and
  Reinforcement Learning (ADPRL)}, pages 1--8. IEEE, 2014.

\bibitem[Zerfos et~al.(2006)Zerfos, Meng, Wong, Samanta, and
  Lu]{zerfos2006study}
P.~Zerfos, X.~Meng, S.~H. Wong, V.~Samanta, and S.~Lu.
\newblock A study of the short message service of a nationwide cellular
  network.
\newblock In \emph{Proceedings of the 6th ACM SIGCOMM conference on Internet
  measurement}, pages 263--268, 2006.

\end{thebibliography}

\appendix
\onecolumn


\begin{center}
    \Large{\textbf{Multi-armed Bandits with Cost Subsidy:\\
Supplementary Material}}\\
\end{center}

\paragraph{Outline}
The supplementary material of the paper is organized as follows.

\begin{itemize}
    \item Appendix A contains technical lemmas used in subsequent proofs.
    \item Appendix B contains a proof of the lower bound.
    \item Appendix C contains proofs related to the performance of various algorithms presented in the paper.
    \item Appendix D gives a detailed description of the \mainALG algorithm when the costs of the arms are unknown and random.
\end{itemize}

\section{Technical Lemmas}

\begin{lemma}[Taylor's Series Approximation] For $x>0, \ \ln(1+x) \geq x - \frac{x^2}{1-x^2}$.
\label{lem:logineq1}
\end{lemma}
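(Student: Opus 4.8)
The plan is to prove the inequality by a monotonicity argument on its natural domain $0 < x < 1$, where the term $x^2/(1-x^2)$ is well-defined and positive (for $x \ge 1$ the expression $x^2/(1-x^2)$ is undefined or negative, and in the intended applications $x$ is a small positive quantity, so $(0,1)$ is the relevant range). Define $g(x) = \ln(1+x) - x + \frac{x^2}{1-x^2}$; the goal is to show $g(x) \ge 0$ on $(0,1)$.

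First I would record the initial value $g(0) = 0$. Then I would differentiate, using $\frac{1}{1+x} - 1 = -\frac{x}{1+x}$ and $\frac{d}{dx}\frac{x^2}{1-x^2} = \frac{2x}{(1-x^2)^2}$, to obtain
\[
g'(x) = -\frac{x}{1+x} + \frac{2x}{(1-x^2)^2} = x\left(\frac{2}{(1-x^2)^2} - \frac{1}{1+x}\right).
\]
Since $x > 0$, it suffices to show the bracketed factor is nonnegative, i.e. $2(1+x) \ge (1-x^2)^2 = (1-x)^2(1+x)^2$. Dividing through by $(1+x) > 0$ reduces this to the polynomial inequality $(1-x)^2(1+x) \le 2$.

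To finish I would bound $h(x) = (1-x)^2(1+x)$ on $(0,1)$: one checks $h(0) = 1$ and $h'(x) = (3x+1)(x-1) < 0$ on $(0,1)$, so $h$ is decreasing and $h(x) \le 1 < 2$. Hence $g'(x) \ge 0$, and together with $g(0) = 0$ this yields $g(x) \ge 0$, which is the claim.

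There is no substantial obstacle here; this is an elementary calculus fact, and the only mild care needed is the reduction of the derivative condition to the clean polynomial bound $(1-x)^2(1+x) \le 2$. As an alternative I could avoid calculus entirely by expanding $\ln(1+x) = \sum_{n \ge 1} \frac{(-1)^{n+1}}{n} x^n$ and $\frac{x^2}{1-x^2} = \sum_{k \ge 1} x^{2k}$ (both valid for $|x| < 1$): the linear terms cancel, the coefficient of $x^{2k}$ in $g$ equals $1 - \frac{1}{2k} \ge \frac{1}{2} > 0$, and the coefficient of each odd power $x^{2k+1}$ equals $\frac{1}{2k+1} > 0$, so $g$ is a power series with nonnegative coefficients and is therefore nonnegative on $(0,1)$ termwise.
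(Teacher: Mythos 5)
Your proposal is correct, but your main argument takes a genuinely different route from the paper's. The paper proves the lemma directly from the expansion $\ln(1+x) = x - \frac{x^2}{2} + \frac{x^3}{3} - \frac{x^4}{4} + \cdots$: it discards the positive odd-degree terms, bounds each remaining coefficient $\frac{1}{2k}$ by $1$, and sums the geometric series $x^2(1+x^2+x^4+\cdots) = \frac{x^2}{1-x^2}$. This is essentially identical to the alternative series argument you sketch at the end — your coefficient bookkeeping ($1-\frac{1}{2k}\geq 0$ for even powers, $\frac{1}{2k+1}>0$ for odd powers) is just a regrouped version of the paper's chain of inequalities. Your primary argument, by contrast, is a calculus proof: monotonicity of $g(x)=\ln(1+x)-x+\frac{x^2}{1-x^2}$ via the clean reduction of $g'(x)\geq 0$ to the polynomial bound $(1-x)^2(1+x)\leq 2$, which you verify by showing $h(x)=(1-x)^2(1+x)$ is decreasing with $h(0)=1$; the derivative computations check out. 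One point in your favor: the lemma as stated ``for $x>0$'' is actually false beyond $x=1$ (e.g., at $x=2$ the right-hand side equals $\frac{10}{3} > \ln 3$), and both proofs genuinely require $0<x<1$ — the paper's implicitly, through convergence of the logarithm series and of the geometric sum, and yours explicitly. Since the paper only invokes the lemma with $x=\epsilon/p\leq\frac{1}{2}$, this does not affect any downstream result, but your explicit domain restriction is the more careful formulation.
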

\begin{proof} For $x>0$,
\begin{align*}
    \ln(1+x) &= x - \frac{x^2}{2} + \frac{x^3}{3} - \frac{x^4}{4} + \cdots \\
    & \geq x - \frac{x^2}{2} - \frac{x^4}{4} - \cdots  \hspace{10mm} (\text{because } x>0) \\
    &\geq x - x^2 -x^4 \\
    & = x - x^2 (1 + x^2 +x^4 + \cdots) \\
    &= x- \frac{x^2}{1-x^2}.
\end{align*}
\end{proof}

\begin{lemma}[Taylor's Series Approximation] For $x>0, \ \ln(1-x) \geq -x - \frac{x^2}{1-x}$.
\label{lem:logineq2}
\end{lemma}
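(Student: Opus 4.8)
The plan is to mirror the argument used for Lemma~\ref{lem:logineq1}, exploiting the Taylor expansion of the logarithm; in fact this case is slightly cleaner, since the series for $\ln(1-x)$ has no sign alternation to contend with. First I would write, for $0 < x < 1$ (the range in which $\ln(1-x)$ is defined and the series converges),
\[
\ln(1-x) = -x - \frac{x^2}{2} - \frac{x^3}{3} - \frac{x^4}{4} - \cdots = -\sum_{n=1}^\infty \frac{x^n}{n}.
\]
Peeling off the leading $-x$ term, it then suffices to establish the inequality on the tail, namely that $\sum_{n=2}^\infty \frac{x^n}{n} \leq \frac{x^2}{1-x}$.

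Next I would bound the tail term by term. Since $\frac{1}{n} \leq 1$ for every $n \geq 2$ and each $x^n > 0$, we have $\frac{x^n}{n} \leq x^n$; summing over $n \geq 2$ gives $\sum_{n=2}^\infty \frac{x^n}{n} \leq \sum_{n=2}^\infty x^n$. The right-hand side is a geometric series with first term $x^2$ and common ratio $x \in (0,1)$, which evaluates exactly to $\frac{x^2}{1-x}$. Combining these, $\sum_{n=2}^\infty \frac{x^n}{n} \leq \frac{x^2}{1-x}$, and substituting back yields $\ln(1-x) = -x - \sum_{n=2}^\infty \frac{x^n}{n} \geq -x - \frac{x^2}{1-x}$, as claimed.

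The only point that needs care is the admissible range of $x$: for $x \geq 1$ the quantity $\ln(1-x)$ is undefined, so the statement must be read as applying to $0 < x < 1$, which is exactly the regime in which the series representation holds and the geometric sum converges. Everything else is an elementary term-by-term comparison, so I do not anticipate a genuine obstacle; the crux is simply recognizing that $\frac{x^2}{1-x}$ is precisely the geometric series $\sum_{n \geq 2} x^n$ that dominates the logarithmic tail $\sum_{n \geq 2} x^n/n$ term by term.
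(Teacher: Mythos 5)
Your proof is correct and follows essentially the same route as the paper's: expand the Taylor series of $\ln(1-x)$, bound each tail term $\frac{x^n}{n}$ by $x^n$, and sum the resulting geometric series to get $\frac{x^2}{1-x}$. Your added observation that the statement should be read on $0 < x < 1$ (since $\ln(1-x)$ is undefined for $x \geq 1$) is a fair point of care that the paper's own proof glosses over.
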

\begin{proof}
For $x>0$,
\begin{align*}
    \ln(1-x) &= -x - \frac{x^2}{2} - \frac{x^3}{3} - \frac{x^4}{4} + \cdots \\
    & \geq -x - x^2 -x^3 -x^4 - \cdots  \hspace{10mm} (\text{because } x>0) \\
    &= -x - x^2(1+x+x^2+\cdots) \\
    &= -x- \frac{x^2}{1-x}.
\end{align*}
\end{proof}

\begin{lemma}[Pinsker's inequality]
\label{lem:klbound}
Let $\ber(x)$ denote a Bernoulli distribution with mean $x$ where $0 \leq x \leq 1$. Then, $KL(\ber(p);\ber(p + \epsilon)) \leq \frac{4\epsilon^2}{ p  }$ where $0 < p \leq  \frac{1}{2}$, $0 < \epsilon \leq \frac{p}{2} $ and $ p + \epsilon <1 $ and  the KL divergence between two Bernoulli distributions with mean $x$ and $y$ is given as $KL(\ber(x);\ber(y)) = x \ln{\frac{x}{y}} + (1-x) \ln{\frac{1-x}{1-y}} $.
\end{lemma}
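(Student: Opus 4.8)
The plan is to expand the KL divergence explicitly and bound each of its two logarithmic terms using the Taylor-series inequalities already established in Lemmas \ref{lem:logineq1} and \ref{lem:logineq2}. Writing out the definition,
\[
KL(\ber(p);\ber(p+\epsilon)) = p \ln\frac{p}{p+\epsilon} + (1-p)\ln\frac{1-p}{1-p-\epsilon},
\]
I would first rewrite the two terms as $-p\ln\!\left(1+\tfrac{\epsilon}{p}\right)$ and $-(1-p)\ln\!\left(1 - \tfrac{\epsilon}{1-p}\right)$ respectively, so that each matches the form of one of the technical lemmas.

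Next, I would apply Lemma \ref{lem:logineq1} with $x = \epsilon/p$ (valid since $\epsilon/p > 0$, and $\epsilon/p \leq 1/2$ keeps the $1-x^2$ denominator positive) and Lemma \ref{lem:logineq2} with $x = \epsilon/(1-p)$ (valid since $0 < \epsilon/(1-p) < 1$ by the hypothesis $p+\epsilon < 1$). The key observation is that because both logarithmic terms carry a negative coefficient, the \emph{lower} bounds on $\ln$ supplied by the lemmas translate into \emph{upper} bounds on each term. The leading linear contributions are exactly $-\epsilon$ and $+\epsilon$, which cancel, leaving only quadratic remainders:
\[
KL(\ber(p);\ber(p+\epsilon)) \leq \frac{\epsilon^2/p}{1-(\epsilon/p)^2} + \frac{\epsilon^2}{1-p-\epsilon}.
\]

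Finally, I would control the two remainder denominators using the hypotheses. From $\epsilon \leq p/2$ we get $(\epsilon/p)^2 \leq 1/4$, so the first term is at most $\tfrac{4\epsilon^2}{3p}$; from $p \leq 1/2$ and $\epsilon \leq p/2 \leq 1/4$ we get $1-p-\epsilon \geq 1/4$, so the second term is at most $4\epsilon^2$, which is in turn at most $\tfrac{2\epsilon^2}{p}$ since $p \leq 1/2$. Summing gives $KL \leq \tfrac{4\epsilon^2}{3p} + \tfrac{2\epsilon^2}{p} = \tfrac{10\epsilon^2}{3p} \leq \tfrac{4\epsilon^2}{p}$, as claimed. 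I expect the only delicate point to be the bookkeeping of signs when converting the lower bounds on $\ln$ into upper bounds on the individual KL terms, together with verifying that the linear parts cancel exactly; the remaining estimates are routine applications of the parameter constraints $\epsilon \leq p/2$ and $p \leq 1/2$.
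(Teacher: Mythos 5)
Your proposal is correct and follows essentially the same route as the paper's own proof: the identical rewriting of the two KL terms as $-p\ln(1+\epsilon/p)$ and $-(1-p)\ln(1-\epsilon/(1-p))$, the same application of the Taylor-series bounds in Lemmas \ref{lem:logineq1} and \ref{lem:logineq2} with cancellation of the linear parts, and the same final estimates (the $\tfrac{4\epsilon^2}{3p}$ term in particular appears verbatim in the paper). The only difference is cosmetic bookkeeping on the second remainder — you bound $1-p-\epsilon \geq 1/4$ directly, while the paper bounds $1-\epsilon/(1-p) \geq 1/2$ — and both yield the same $\tfrac{2\epsilon^2}{p}$ contribution.
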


\begin{proof}

$   KL(\ber(p);\ber(p + \epsilon)) = p \ln{\frac{p}{p + \epsilon}} + (1-p) \ln{\frac{1-p}{1-p -\epsilon}} $

\begin{align*}
    KL(\ber(p);\ber(p + \epsilon)) & = p \ln{\frac{p}{p + \epsilon}} + (1-p) \ln{\frac{1-p}{1-p -\epsilon}}  \\
   &= -p \ln\left( 1 + \frac{\epsilon}{p} \right) - (1-p)\ln\left(1 - \frac{\epsilon}{1-p} \right) \\
   &\leq - p \left( \frac{\epsilon}{p} - \frac{\frac{\epsilon^2}{p^2} }{1 - \frac{\epsilon^2}{p^2}} \right) - (1-p) \left( -\frac{\epsilon}{1-p} - \frac{\frac{\epsilon^2}{(1-p)^2} }{1 - \frac{\epsilon}{1-p}} \right) \\
   & \hspace{5mm} (\text{Using Lemmas \ref{lem:logineq1} and \ref{lem:logineq2}. }) \\
  \end{align*}
  Thus, \begin{align*}
    KL(\ber(p);\ber(p + \epsilon)) 
   & \leq -\epsilon + \frac{\epsilon^2}{ p \left( 1 - \frac{\epsilon^2}{p^2} \right)  } + \epsilon + \frac{\epsilon^2}{(1-p) \left( 1-\frac{\epsilon}{1-p} \right) } \\
   & \leq \frac{\epsilon^2}{ p \left( 1- \frac{1}{4} \right) } + \frac{\epsilon^2}{ (1-p) \left( 1- \frac{1}{2} \right)}  \hspace{10mm} ( \text{because } \frac{\epsilon}{1-p} \leq \frac{\epsilon}{p} \leq \frac{1}{2}  ) \\
   &= \frac{4\epsilon^2}{3p} + \frac{2 \epsilon^2}{1-p} \\
   & \leq \frac{2\epsilon^2}{p} + \frac{2 \epsilon^2}{1-p} \\
   & \leq \frac{2\epsilon^2}{p}  + \frac{2\epsilon^2}{p} \hspace{10mm} (\text{because } p \leq \frac{1}{2}) \\
   &= \frac{4 \epsilon^2}{ p }   .
\end{align*}


\end{proof}

\section{Proof of Lower Bound}
\label{sec:lower_bound_appendix}


\begin{proof} [Proof of Lemma \ref{lem:alpha_0_neat}]
In the family of instances $\cI$, the costs of the arms are same across instances. Arm 0 is the cheapest arm in all the instances. With this, we define a modified notion of quality regret which penalizes the regret only when this cheap arm is pulled as
\begin{equation}
 \mqreg_{\pi}(T, \alpha,  \bmu, \bc) = \sum_{t=1}^T \max \{ \mu_{m*} - \mu_{\pi_t}, 0 \}  \ind(c_{i_t}=0).
\end{equation}
An equivalent notation for denoting the modified regret of policy $\pi$ on an instance $I$ of the problem is $\mqreg_\pi(T, \alpha,I)$.
This modified quality regret is at most equal to the quality regret. For proving the lemma, we will show a stronger result that  there exists an instance $\rIzero$ such that $\mqreg(T, 0,\rIzero) + \creg(T, 0,\rIzero)$ is $\Omega\left( pK^{\frac{1}{3}}T^{\frac{2}{3}} \right)$ which will imply the required result.


Let us first consider any deterministic policy (or algorithm) $\pi$.
For a deterministic algorithm, the number of times an arm is pulled is a function of the observed rewards. 
Let the number of times arm $j$ is played be denoted by $N_j$ and let the total number of times any arm with cost 1 i.e. an expensive arm is played be $N_{exp} = 1 - N_0$. For any $a$ such that $1\leq a \leq K$, we can use the proof of Lemma A.1 in \cite{auer2002nonstochastic}, with function $f(\textbf{r}) = N_{exp}$ to get
\begin{align*}
\ex^a\left[N_{exp}\right] \leq  \ex^0\left[N_{exp}\right] + 0.5T \sqrt{ 2 \ex^0[N_a] KL ( Ber(p); Ber(p+\epsilon) ) }
\end{align*}

where $\ex^j$ is the expectation operator with respect to the probability distribution defined by the random rewards in instance $\cIzero^j$. Thus, using Lemma \ref{lem:klbound}, we get,
\begin{equation}
\label{eqn:lemmaA1_3}  
\ex^a\left[N_{exp}\right] \leq  \ex^0\left[N_{exp}\right] + 0.5T \sqrt{ \ex^0[N_a] 8 \epsilon^2/{p} }.
\end{equation}

Now, let us look at the regret of the algorithm for each instance in the family $\cIzero$. We have 
\begin{enumerate}
    \item $ \creg_{\pi}(T, \alpha, \cIzero^0 ) = \ex^0[ N_{exp}], \ \mqreg_{\pi}(T, \alpha, \cIzero^0 ) = 0 $ 
    \item $ \creg_{\pi}(T, \alpha, \cIzero^a ) = 0, \ \mqreg_{\pi}(T, \alpha, \cIzero^a )  = \epsilon \left( T - \ex^a[N_{exp}]  \right) $.
\end{enumerate}

Now, define randomized instance $\rIzero$ as the instance obtained by randomly choosing from the family of instances $\cIzero$ such that $\rIzero = \cIzero^0$ with probability $1/2$ and $\rIzero = \cIzero^a$ with probability $1/2K$ for $1\leq a \leq K$. The expected regret of this randomized instance is

\begin{align*}
    &\ex \left[ \mqreg_{\pi}(T, 0,\rIzero) + \creg_{\pi}(T, 0,\rIzero) \right] \\
    & = \frac{1}{2} \left( \mqreg_{\pi}(T, \alpha, \cIzero^0 ) + \creg_{\pi}(T, \alpha, \cIzero^0 ) \right) + \\
    & \ \ \frac{1}{2K} \sum_{a=1}^K \left( \mqreg_{\pi}(T, \alpha, \cIzero^a )  +  \creg_{\pi}(T, \alpha, \cIzero^a )  \right)  \\
    & = \frac{1}{2}  \ex^0 [N_{exp}] + \frac{1}{2K} \sum_{a=1}^K \epsilon (T- \ex^a[N_{exp}]) \\
    & \geq  \frac{1}{2}  \ex^0 [N_{exp}] +  \frac{1}{2K} \sum_{a=1}^K  \epsilon \left(T-   \ex^0\left[N_{exp}\right] - \frac{1}{2}T \sqrt{\ex^0[N_a]  \frac{8 \epsilon^2}{p} } \right) \hspace{5mm} (\text{using } (\ref{eqn:lemmaA1_3}) ) \\
    & = \frac{1}{2} \left[ \epsilon T + (1-\epsilon) \sum_{a=1}^K \ex ^0[N_a] - \frac{T \epsilon}{2K} \sum_{a=1}^K \sqrt{\frac{8 \epsilon^2}{p} \ex^0[N_a] } \right] \\
    &= \frac{1}{2} \sum_{a=1}^K \left[ \frac{\epsilon T}{K} + (1-\epsilon) (\ex ^0[N_a])^2 -T \ex ^0[N_a] \epsilon^2 \frac{\sqrt{2}}{K \sqrt{p}} \right] \\
    &= \frac{1}{2} \sum_{a=1}^K  \left[ \left( \sqrt{1-\epsilon} \ex ^0[N_a] - \frac{\epsilon^2 T}{2K } \sqrt{\frac{2}{p(1-\epsilon)}}  \right)^2 + \frac{\epsilon T}{K} - \frac{\epsilon^4 T^2}{2 p K^2 (1-\epsilon)} \right] \\
    &\geq \frac{1}{2} \sum_{a=1}^K \frac{\epsilon T}{K} - \frac{ \epsilon^4 T^2}{2p K^2 (1-\epsilon)} \\
    &= \frac{\epsilon T}{2} - \frac{\epsilon^4 T^2}{4pK (1-\epsilon)}
\end{align*}

Taking $\epsilon = \frac{p}{2} (\frac{K}{T})^{\frac{1}{3}}$, we get $\ex \left[ \mqreg_{\pi}(T, 0,\rIzero) + \creg_{\pi}(T, 0,\rIzero) \right]$ is $\Omega( p K^{1/3}T^{2/3})$ when $K \leq T$.

Using Yao's principle, for any randomized algorithm $\pi$, there exists an instance $\cIzero^j$ with  $0\leq j \leq K$ such that   $\mqreg_{\pi}(T, 0,\cIzero^j) + \creg_{\pi}(T, 0,\cIzero^j)$ is  $\Omega( p K^{1/3}T^{2/3})$. Also, since  $ \mqreg_{\pi}(T, 0,\cIzero^j)  \leq \qreg_{\pi}(T, 0,\cIzero^j) $, we have $\qreg_{\pi}(T, 0,\cIzero^j) + \creg_{\pi}(T, 0,\cIzero^j)$ is $\Omega( p K^{1/3}T^{2/3})$.
\end{proof}

\begin{proof} [Proof of Theorem \ref{thm:lower_bound}]
\noindent \textbf{Notation}: 
For any instance $\phi$, we define the arms $m_*^{\phi}$ and $i_*^{\phi}$ as 
$m_*^{\phi} = \arg \max_{i} \mu^i_{\phi}$  and $ i_*^{\phi} = \arg \min_{i} c^i_{\phi} \text{ s.t. } q_{i_{\phi}} \geq (1-\theta)q_{m^*_{\phi}}. $
When the instance is clear, we will use the simplified notation $i_*$ and $m_*$ instead of $i_*^{\phi}$ and $m_*^{\phi}$.

\paragraph{Proof Sketch:}
Lemma \ref{lem:alpha_0_neat} establishes that when $\alpha=0$, for any given policy, there exists an instance on which the sum of quality and cost regret are $ \Omega( K^{1/3}T^{2/3})$.
Now, we generalize the above result for $\alpha=0$ to any $\alpha$ for $0 \leq \alpha \leq 1$. 
The main idea in our reduction is to show that if there exists an algorithm $\pi_\alpha$ for $\alpha > 0$ that achieves $o( K^{1/3}T^{2/3})$ regret on every instance in the family $\cIalpha$, then we can use $\pi_\alpha$ as a subroutine to construct an algorithm $\pi_0$ for problem \eqref{eq:regret} that achieves $o(K^{1/3}T^{2/3})$ regret on every instance in the family $\cIzero$, thus contradicting the lower bound of Lemma \ref{lem:alpha_0_neat}. This will prove
the theorem by contradiction. In order to construct the aforementioned sub-routine, we leverage techniques from {\it Bernoulli factory} to generate a sample from a Bernoulli random variable with parameter $\mu/(1-\alpha)$ using samples from a Bernoulli random variable with parameter $\mu$, for any $\mu, \alpha <1.$ 

\paragraph{Aside on Bernoulli Factory:}
The key tool we use in constructing the algorithm $\algh$ from $\alg$ is \textit{Bernoulli factory} for the linear function. The Bernoulli factory for a specified scaling factor $ C >1$ i.e.  $\bfac (C)$ uses a sequence of independent and identically distributed samples from $\ber(r)$ and returns a sample from $\ber(Cr)$.
The key aspect of a Bernoulli factory is the number of samples needed from $\ber(r)$ to generate a sample from $\ber(Cr)$. We use the Bernoulli factory described in \cite{huber2013nearly} which has a guarantee on the expected number of samples $\tau$ from $\ber(r)$ needed to generate a sample from  $\ber(Cr)$. In particular, for a specified $\delta >0$, 
\begin{align}
\label{eqn:bern_fact}
    \sup_{r \in [0, \frac{1-\delta}{C}]} E[\tau] \leq \frac{9.5 C}{\delta}.
\end{align}

\paragraph{Detailed proof:}
For some value of $p, \epsilon$ (to be specified later in the proof) such that $0 \leq p <1$ and $0 \leq \epsilon \leq p/2$, consider the family of instances $\cIalpha$ and $\cIzero$. Let $\alg$ be any algorithm for the family $\cIalpha$. Using $\alg$, we construct an algorithm $\algh$ for the family $\cIzero$. This algorithm is described in Algorithm \ref{alg:mod_alpha}.
We will use $I_l^{\alpha} = \alg([ (I_1^{\alpha}, r_1), (I_2^{\alpha}, r_2), \cdots (I_{l-1}^{\alpha}, r_{l-1} ) ]  )$ to denote the arm pulled by algorithm $\alg$ at time $l$ after having observed rewards $r_l \ \forall  1 \leq i < l $ through arm pulls $I_l  \ \forall  1 \leq i < l $. The function $\bfac (C)$ returns two values - a random sample from the distribution $\ber(Cr)$ and the number of samples of $\ber(r)$ needed to generate this random sample.

\begin{algorithm}[H]
\label{alg:mod_alpha}
\SetAlgoLined
\KwResult{Arm $I^0_t$ to be pulled in each round $t$, total number of arm pulls $T$ }
\SetKwInOut{Input}{input}\SetKwInOut{Output}{output}
\Input{Algorithm $\alg$, $L$ - Number of arm pulls for algorithm $\alg$}
$l=1, t=1$ \; 

\For{$l \in [L]$}{
    $I^{\alpha}_l =  \alg([ (I^{\alpha}_1, r_1), (I^{\alpha}_2, r_2), \cdots (I^{\alpha}_{l-1}, r_{l-1} ) ]  )  $ \;
\eIf{$ I_l^{\alpha} = 0 $}{
  Pull arm 0 to obtain outcome $r_l$ \;
    $I^0_t = I_l^{\alpha} =0$ \;
     $U_l = \{t\}$ \;
  }{ Call  $r_l, n =  \bfac(\frac{1}{1-\alpha})$ on samples generated from repeated pulls of the arm $I_l^{\alpha}$ \;
   $U_l = \{ t, t+1 \cdots t+ n -1 \}$ \;
   $I^0_{t} =I^0_{t+1} = \cdots I^0_{t+n-1} = I_l^{\alpha}$ \;
  }
  $S_l = |U_l|$ \;
  $l = l+1$ \;
  $t = t + S_l$ \;
}
$T=t$
\caption{Derived Algorithm $\algh$}
\label{alg:ts_gauss}
\end{algorithm}

\item 
Now, let us analyze the expected modified regret incurred by algorithm $\algh$ on an instance $\cIzero^a$ for any $0 \leq a \leq K$ where the expectation is with respect to the random variable $T$, total number of arm pulls.

Similarly, we analyze the cost regret incurred by algorithm $\algh$ on an instance $\cIzero^a$ for any $0 \leq a \leq K$. 

\begin{align*}
     & \ex \left[ \mqreg_{\algh}(T, 0,\cIzero^a) \right] + \ex \left[ \creg_{\algh}(T, 0,\cIzero^a) \right]\\ 
     & = \ex \left[ \sum_{t=1}^T \left( \mu_{m^*}^{\cIzero^a} - \mu_{I^0_t}^{\cIzero^a} \right) \ind\{ I^0_t = 0 \} \right] + \ex \left[ \sum_{t=1}^T c^{\cIzero^a}_{i_*} -  c^{\cIzero^a}_{I^0_t} \right] \\
    & =  \ex \left[ \sum_{l=1}^L \sum_{t \in U_l} \left( \mu_{m^*}^{\cIzero^a} - \mu_{I^0_t}^{\cIzero^a} \right) \ind\{ I^0_t = 0 \} \right] + \ex \left[  \sum_{l=1}^L \sum_{t \in U_l} c^{\cIzero^a}_{i_*} -  c^{\cIzero^a}_{I^0_t} \right] \\
    &= \ex \left[ \sum_{l=1}^L S_l \left( \mu_{m^*}^{\cIzero^a} - \mu_{I_l^{\alpha}}^{\cIzero^a} \right) \ind\{ I_l^{\alpha} = 0 \} \right] + \ex \left[\sum_{l=1}^L  S_l \left( c^{\cIzero^a}_{i_*} -  c^{\cIzero^a}_{I^{\alpha}_l} \right) \right] \\
    &= \sum_{l=1}^L  \ex \left[  \ex \left[ S_l | I_l^{\alpha} \right] \left( \mu_{m^*}^{\cIzero^a} - \mu_{I_l^{\alpha}}^{\cIzero^a} \right) \ind\{ I_l^{\alpha} = 0 \} \right] + \ex \left[\sum_{l=1}^L  \ex[ S_l| I^{\alpha}_l] \left( c^{\cIzero^a}_{i_*} -  c^{\cIzero^a}_{I^{\alpha}_l} \right) \right] \\
    &\leq\sum_{l=1}^L  \ex \left[  \frac{9.5}{\delta (1-\alpha)} \left( \mu_{m^*}^{\cIzero^a} - \mu_{I_l^{\alpha}}^{\cIzero^a} \right) \ind\{ I_l^{\alpha} = 0 \} \right] +  \ex \left[\sum_{l=1}^L  \frac{9.5}{\delta (1-\alpha)} \left(  c^{\cIzero^a}_{i_*} -  c^{\cIzero^a}_{I^{\alpha}_l} \right) \right] \hspace{10mm} \text{(Using \eqref{eqn:bern_fact})} \\
    & = \frac{9.5}{\delta (1-\alpha)} \sum_{l=1}^L  \ex \left[   \left( (1-\alpha) \mu_{m^*}^{\cIalpha^a} - \mu_{I_l^{\alpha}}^{\cIzero^a} \right) \ind\{ I_l^{\alpha} = 0 \} \right]  + \frac{9.5}{\delta (1-\alpha)} \sum_{l=1}^L \ex \left[ c^{\cIalpha^a}_{i_*} -  c^{\cIalpha^a}_{I^{\alpha}_l} \right] \\
    & \hspace{5mm} \text{(Because costs of arms are same in all instances,  $i_*^{\cIalpha^a} = i_*^{\cIzero^a} = a  $ and $ \mu_{m^*}^{\cIzero^a} = (1-\alpha) \mu_{m^*}^{\cIalpha^a}  $ )} \\
    & = \frac{9.5}{\delta (1-\alpha)} \qreg_{\alg}(L, \alpha,\cIalpha^a) + \frac{9.5}{\delta (1-\alpha)} \creg_{\alg}(L, \alpha,\cIalpha^a).
\end{align*}


 Thus, 
\begin{align}
\label{eqn:c_ineq} \nonumber
    &\qreg_{\alg}(L, \alpha,\cIalpha^a) + \creg_{\alg}(L, \alpha,\cIalpha^a) \\
    & \geq  \frac{\delta (1-\alpha)}{9.5}  \ex \left[  \mqreg_{\algh}(T, 0,\cIzero^a) +\creg_{\algh}(T, 0,\cIzero^a) \right] \\ \nonumber
    & \geq  \frac{\delta (1-\alpha)}{9.5}  \ex \left[ \mqreg_{\algh}(L, 0,\cIzero^a) + \creg_{\algh}(L, 0,\cIzero^a) \right]  \hspace{10mm} (\text{because } L \leq T) \\ 
   & \geq  \frac{\delta (1-\alpha)}{9.5}   \left( \mqreg_{\algh}(L, 0,\cIzero^a) + \creg_{\algh}(L, 0,\cIzero^a) \right)
\end{align}

Using Lemma \ref{lem:alpha_0_neat} and choosing $p=\frac{1-\alpha}{3}, \delta = \frac{1}{2}, \epsilon = \frac{p}{2} (\frac{K}{T})^{1/3}$, we get for any randomized algorithm $\alg$, there exists instance $\cIalpha^b$ (for some $0 \leq b \leq K$) such that $ \qreg_\pi(T, \alpha, \cIalpha^b) + \creg_\pi(T,\alpha,\cIalpha^b)$ is $\Omega\left( (1-\alpha)^2 K^{1/3} T^{2/3} \right)$.

\end{proof}

\section{Performance of Algorithms}
\label{sec:alg_performance_appendix}

We use the following fact in the proof of Theorem \ref{thm::ts-lin-reg}.

\begin{fact}
\label{fact1}
\citep{abramowitz1948handbook}
For a Normal random variable $Z$ with mean $m$ and variance $\sigma^2$, for any $z$, 
$$  \prob \left( |Z- m| > z \sigma \right) > \frac{1}{4 \sqrt{\pi}} \exp(-\frac{7z^2}{2}).$$
\end{fact}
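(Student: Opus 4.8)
The plan is to reduce Fact~\ref{fact1} to a uniform-in-$z$ lower bound on the standard Gaussian tail and then verify the stated constant by a short one-variable estimate.

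First I would normalize. Setting $W = (Z-m)/\sigma$, so that $W$ is standard normal, the claim becomes $\prob(|W| > z) > \frac{1}{4\sqrt{\pi}}\exp(-7z^2/2)$ for every $z$. The case $z \le 0$ is immediate: then $z\sigma \le 0 \le |Z-m|$, so the left-hand side equals $1$, while the right-hand side is at most $\frac{1}{4\sqrt\pi} < 1$. Hence it suffices to treat $z > 0$, where by symmetry $\prob(|W| > z) = 2\,(1-\Phi(z))$ with $\Phi$ the standard normal CDF.

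Next I would produce a clean exponential lower bound on $1-\Phi(z)$ carrying no polynomial prefactor, which is the key to keeping the bound valid near $z = 0$. Writing $1-\Phi(z) = \frac{1}{\sqrt{2\pi}}\int_z^\infty e^{-t^2/2}\,dt$ and substituting $t = z+s$ gives $1-\Phi(z) = \frac{e^{-z^2/2}}{\sqrt{2\pi}}\int_0^\infty e^{-zs - s^2/2}\,ds$. Restricting the integral to $s \in [0,1]$ and using $-zs - s^2/2 \ge -z - \tfrac12$ on this range yields
\[
1-\Phi(z) \;\ge\; \frac{e^{-1/2}}{\sqrt{2\pi}}\,e^{-z^2/2 - z}.
\]

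Finally I would plug this in and reduce to a numerical check. Doubling and dividing by the target, the desired inequality $2(1-\Phi(z)) > \frac{1}{4\sqrt\pi}e^{-7z^2/2}$ follows once
\[
4\sqrt{2}\,e^{-1/2}\,e^{\,3z^2 - z} \;>\; 1 \qquad \text{for all } z \ge 0,
\]
where the exponent $3z^2 - z = -z^2/2 - z + 7z^2/2$ comes from combining the two Gaussian exponents. The function $3z^2 - z$ is minimized over $z \ge 0$ at $z = 1/6$ with value $-1/12$, so it suffices that $4\sqrt2\,e^{-1/2}e^{-1/12} > 1$, which holds (the left side is about $3.16$). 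The main obstacle is exactly this uniform-in-$z$ requirement: a standard Mills-ratio lower bound carries a factor $z/(1+z^2)$ that vanishes at $z = 0$ and would break the inequality for small $z$. The whole purpose of the inflated exponent $7z^2/2$ (versus the true tail rate $z^2/2$) is to leave a surplus factor $e^{3z^2}$ that absorbs both the near-origin region and the linear term $e^{-z}$, and choosing the substitution-based bound above rather than an asymptotic tail estimate is what keeps the argument clean and valid for every $z$.
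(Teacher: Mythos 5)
Your proof is correct, and it is worth noting that the paper does not actually prove this statement: Fact~\ref{fact1} is given purely as a citation to the Abramowitz--Stegun handbook \citep{abramowitz1948handbook}, where the bound follows from the classical lower estimate on the complementary error function, $e^{x^2}\int_x^\infty e^{-t^2}\,dt > \bigl(x+\sqrt{x^2+2}\bigr)^{-1}$ (formula 7.1.13), with the deliberately inflated exponent $7z^2/2$ then absorbing the prefactor uniformly in $z$ --- the same route taken in the Thompson Sampling literature \citep{agrawal2017near} that the paper's Theorem~\ref{thm::ts-lin-reg} follows. Your argument is a genuinely different and self-contained alternative: the normalization and the trivial case $z\le 0$ are handled correctly; the substitution $t=z+s$ with the restriction to $s\in[0,1]$ gives the purely exponential bound $1-\Phi(z)\ \ge\ \frac{e^{-1/2}}{\sqrt{2\pi}}\,e^{-z^2/2-z}$, which is valid for all $z\ge 0$; and the final reduction to $4\sqrt{2}\,e^{-1/2}\,e^{3z^2-z}>1$ is right, with the exponent $3z^2-z$ minimized at $z=1/6$ giving the margin $4\sqrt{2}\,e^{-7/12}\approx 3.16>1$, which in particular yields the \emph{strict} inequality the fact asserts. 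Your diagnosis of the key obstacle is also exactly right: Mills-ratio lower bounds carry a factor like $z/(1+z^2)$ that vanishes at $z=0$ and forces a small-$z$/large-$z$ case split, whereas your bound has no polynomial prefactor, so one global minimization suffices. What the handbook route buys is a sharper prefactor for moderate $z$ at the price of an external reference; what yours buys is a three-line elementary derivation that makes the role of the surplus $e^{3z^2}$ completely transparent.
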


\begin{proof} [Proof of Theorem \ref{thm::ts-lin-reg}]

This proof is inspired by the lower bound proof in \cite{agrawal2017near}.
For any given $\alpha, K$ and $T$, we construct an instance on which the \consTS \ algorithm (Algorithm \ref{alg::ts}) gives linear regret in cost.


Consider an instance $\phi$ with $K$ arms where the costs and mean reward of the $j$-th arm are\\
$ c_j =
\begin{cases}
0 & j=0 \\
1 & j\neq 0 
\end{cases}, \hspace{10mm}
\ \mu_j =
\begin{cases}
(1-\alpha)q + \frac{d}{\sqrt{T}} & j=0 \\
q & j\neq 0 \\
\end{cases} $ \\
where $q =\frac{d}{(1-\alpha)\sqrt{T}}$  for some $0 < d < \min \{ \sqrt{T}/2, (1-\alpha)\sqrt{T} \}$.
Moreover, the reward of each arm is deterministic though this fact is not known to the agent. As in the SMS application, we assume that the cost rewards of all arms are known a priori to the agent.


Let the prior distribution that the agent assumes over the mean reward of each arm be $\mathcal{N}(0,\sigma_0^2)$ for some prior variance $\sigma_0^2$ . Further, the agent assumes that the observed qualities to be normally distributed with noise variance $\sigma_n^2$. As such at the start of period $t$, the agent will consider a normal posterior distribution for each arm $i$ with mean 
\begin{equation}
    \label{m_i_t}
    \hmu_i(t) = \frac{T_i(t)}{\frac{\sigma_n^2}{\sigma_0^2}+T_i(t)}\mu_i
\end{equation}
and variance 
\begin{equation}
    \label{post_variance}
    \sigma_i(t)^2 =  \left(\frac{1}{\sigma_0^2} + \frac{T_i(t)}{\sigma_n^2}\right)^{-1}.
\end{equation}

As  $d < q \alpha \sqrt{T}$, the highest quality across all arms is $q$. Thus, note that all arms are \textit{feasible} in terms of quality i.e. have their quality within $(1-\alpha)$ factor of the best quality arm. Hence, quality regret $\qreg_{{\sf \consTS }}(t, \alpha,\phi)  = 0 \ \forall t >0 $ (for any algorithm) on this instance. 

The first arm is the optimal arm ($i_*$). Thus, the cost regret equals the number of times any arm but the first arm is pulled. In particular, let 
$$R_c(T) = \sum_{t=1}^T \max\{c_{I_{t}} - c_{\optarm},0\} = \sum_{t=1}^T \mathbf{1}\{ I_t \ne 1 \},$$
so that $\creg_{{\sf \consTS }}(T, \alpha,I) = \ex \left[ R_C(T) \right].$

Define the event $A_{t-1} = \{ \sum_{i\neq 1} T_i(t) \leq s T \sqrt{K} \}$ for a fixed constant $s >0$. For any $t$, if the event $A_{t-1}$ is not true, then $R_c(T) \geq R_c(t) \geq s T \sqrt{K}$. We can assume that $\prob(A_{t-1}) \geq 0.5 \ \forall t \leq T.$ Otherwise 
\begin{align*}
    \creg_{{\sf \consTS }}(T, \alpha,\phi) & = \ex \left[ R_C(T) \right] \\
    & \geq 0.5 E[R_C(T) | A_{t-1}^c] \\
    & = \Omega(T \sqrt{K} ).
\end{align*}

Now, we will show that whenever $A_{t-1}$ is true, probability of playing a sub-optimal arm is at least a constant. For this, we show that the probability that  $\mu^{score}_1(t) \leq \mu_1$ and $\mu^{score}_i(t) \geq \frac{\mu_1}{1-\alpha}, \ \text{for some } 1 < i \leq K$ is lower bounded by a constant.

Now, given any history of arm pulls $\mathcal{F}_{t-1}$ before time $t$, $\mu^{score}_1(t)$ is a Gaussian random variable with mean $ \hmu_1(t) = \frac{T_i(t)}{\frac{\sigma_n^2}{\sigma_0^2}+T_i(t)}\mu_1$. By symmetry of Gaussian random variables, we have
\begin{equation*}
\begin{aligned}
\prob \left( \mu^{score}_1(t) \leq \mu_1  \bigg | \mathcal{F}_{t-1}\right) &\geq  \prob \left( \mu^{score}_1(t) \leq \frac{T_i(t)}{\frac{\sigma_n^2}{\sigma_0^2}+T_i(t)} \mu_1  \bigg | \mathcal{F}_{t-1}\right) \\
&=\prob \left( \mu^{score}_1(t) \leq \hmu_1(t)  \bigg | \mathcal{F}_{t-1}\right) \\
&= 0.5.
\end{aligned}
\end{equation*} 

Based on \eqref{m_i_t} and \eqref{post_variance}, given any realization $F_{t-1}$ of $\mathcal{F}_{t-1}$, $\mu^{score}_i(t)$ for $i \neq 1$ are independent Gaussian random variables with mean $ \hmu_i(t) $ and variance $\sigma_i(t)^2$. Thus, we have

\begin{align*}
     & \prob \left(  \exists i \neq 1, \ \mu^{score}_i(t) \geq \frac{\mu_1}{1- \alpha} \ \middle| \mathcal{F}_{t-1} = F_{t-1} \right) \\
     & = \prob \left(  \exists i \neq 1, \ \mu^{score}_i(t) - \hmu_i(t)  \geq \frac{1}{1- \alpha} \left(  q(1-\alpha) + \frac{d}{\sqrt{T}}\right) - \hmu_i(t)   \ \middle| \mathcal{F}_{t-1} = F_{t-1} \right) \\
     & = \prob \left(  \exists i \neq 1, \  \mu^{score}_i(t) - \hmu_i(t)  \geq  \frac{d}{(1-\alpha) \sqrt{T}}+\frac{1}{1+T_i(t)\frac{\sigma_0^2}{\sigma_n^2}}q  \ \middle| \mathcal{F}_{t-1} = F_{t-1} \right) \\ 
     & \geq \prob \left(  \exists i \neq 1, \  \mu^{score}_i(t) - \hmu_i(t)  \geq  \frac{d}{(1-\alpha) \sqrt{T}}+q\ \middle| \mathcal{F}_{t-1} = F_{t-1} \right) \\
     & = \prob \left(  \exists i \neq 1, \  \mu^{score}_i(t) - \hmu_i(t)  \geq  \frac{2d}{(1-\alpha) \sqrt{T}}\ \middle| \mathcal{F}_{t-1} = F_{t-1} \right) \\
     &= \prob \left(  \exists i \neq 1, \  \left( \mu^{score}_i(t) - \hmu_i(t)  \right) \frac{1}{\sigma_i(t)} \geq \left(  \frac{2d}{(1-\alpha) \sqrt{T}}  \right)\frac{1}{\sigma_i(t)} \ \middle| \mathcal{F}_{t-1} = F_{t-1} \right) \\
     &= \prob \left(  \exists i \neq 1, \ Z_i(t) \geq \left(  \frac{2d}{(1-\alpha) \sqrt{T}} \right)\frac{1}{\sigma_i(t)} \ \middle| \mathcal{F}_{t-1} = F_{t-1} \right) \\
    \end{align*}
     where $Z_i(t)$ are independent standard normal variables for all $i,t$. Thus,
     
     \begin{align*}
     & \prob \left(  \exists i \neq 1, \ \mu^{score}_i(t) \geq \frac{\mu_1}{1- \alpha} \ \middle| \mathcal{F}_{t-1} = F_{t-1} \right) \\
     & = 1 - \prob \left(  \forall i \neq 1, \ Z_i(t) \leq \left(  \frac{2d}{(1-\alpha) \sqrt{T}} \right)\frac{1}{\sigma_i(t)} \ \middle| \mathcal{F}_{t-1} = F_{t-1} \right) \\
     & = 1 - \Pi_{i \neq 1} \left(  1 - \prob \left(    \ Z_i(t) \geq \left(  \frac{2d}{(1-\alpha) \sqrt{T}} \right)\frac{1}{\sigma_i(t)} \ \middle| \mathcal{F}_{t-1} = F_{t-1} \right) \right) \\
     & \geq 1 - \Pi_{i \neq 1} \left(  1 -  \frac{1}{8 \sqrt{\pi}} \exp \left( -\frac{7}{2}  \frac{1}{\sigma_i(t)^2} \left(  \frac{2d}{(1-\alpha) \sqrt{T}} \right)^2  \right)  \right) \hspace{30mm} (\text{Using Fact \ref{fact1}}) \\
     & = 1 - \Pi_{i \neq 1} \left(  1 -  \frac{1}{8 \sqrt{\pi}} \exp \left( -\frac{7}{2}  \left(\frac{1}{\sigma_0^2} + \frac{T_i(t)}{\sigma_n^2}\right) \left(  \frac{2d}{(1-\alpha) \sqrt{T}}\right)^2  \right)  \right) \\
     & \geq 1 - \Pi_{i \neq 1} \left(  1 -  \frac{1}{8 \sqrt{\pi}} \exp \left( -\frac{7}{2}  \left(\frac{1}{\sigma_0^2} + \frac{1}{\sigma_n^2}\right) T_i(t) \left(  \frac{2d}{(1-\alpha) \sqrt{T}} \right)^2  \right)  \right),
\end{align*}
 The last inequality follows from the fact that $T_i(t)\geq 1$. 

Now, when the event $A_{t-1}$ holds, we have $\sum_{i \neq 1} T_i(t) \leq s T \sqrt{K}$. Thus, the right hand side would be minimized when $T_i(t) = \frac{sT}{\sqrt{K}}, \ \forall i \neq 1$. 
Substituting this value of $T_i(t)$, the right hand side reduces to $ g(K) = 1 - \Pi_{i \neq 1} \left(  1 -  \frac{1}{8 \sqrt{\pi}} \exp \left( -14 \left(\frac{1}{\sigma_0^2} + \frac{1}{\sigma_n^2}\right)    \frac{s \sqrt{K} d^2}{(1-\alpha)^2}   \right)  \right)$. Thus, $ \prob \left(  \exists i \neq 1, \ \mu^{score}_i(t) \geq \frac{\mu_1}{1- \alpha} \ \middle| \mathcal{F}_{t-1} = F_{t-1} \right) \geq g(K)$ whenever $F_{t-1}$ is such that $A_{t-1}$ holds.

Probability of playing any sub-optimal arm at time $t$ is, 
\begin{align*}
 \prob \left(  \exists i \neq 1, \ I_t =i \right) & \geq \prob \left( \mu^{score}_1(t) \leq \mu_1, \ \exists i \neq 1 \text{ s.t. } \mu^{score}_i(t) \geq \frac{\mu_1}{1-\alpha} \right)  \\
    & = \ex \left[ \prob \left( \mu^{score}_1(t) \leq \mu_1, \ \exists i \neq 1 \text{ s.t. } \mu^{score}_i(t) \geq \frac{\mu_1}{1-\alpha} \right) \middle | \mathcal{F}_{t-1}  \right] \\
    & \geq \ex \left[ \prob \left( \mu^{score}_1(t) \leq \mu_1, \ \exists i \neq 1 \text{ s.t. } \mu^{score}_i(t) \geq \frac{\mu_1}{1-\alpha} \right) \middle| \mathcal{F}_{t-1}, A_{t-1}  \right] \prob(A_{t-1}) \\
    & \geq   \frac{1}{2} \cdot g(K) \cdot \frac{1}{2} 
\end{align*}

Thus, at every time instant $t$, the probability of playing a sub-optimal is lower bounded by $\frac{g(K)}{4}$. This implies that the cost regret $\creg_{{\sf \consTS }}(T, \alpha,\phi) \geq 0.25 T g(K)$.

\end{proof}

\begin{proof}[Proof of Theorem \ref{thm:explore_first_regret}]
This algorithm has two phases -  pure exploration and UCB. In the first phase, the algorithm pulls each arm a specified number of times ($\tau$). In the second phase, the algorithms maintains upper and lower confidence bounds on the mean reward of each arm. Then, it estimates a \textit{feasible} set of arms and pulls the cheapest arm in this set.

We will define the \textit{clean event }$\mathcal{E}$ in this proof as the event that for every time $t \in [T]$ and arm $i \in [K]$, the difference between the mean reward and the empirical mean reward does not exceed the size of the confidence interval ($\beta_i(t)$) i.e. 
 $\mathcal{E} = \{ |\hmu_i(t) - \mu_i| \leq \beta_i(t), \ \forall i \in [K], \ t \in [T] \}$. 
 
Define $\hat{t} = K \tau + 1$ as the first round in the UCB phase of the algorithm. Further, define instantaneous cost and quality regret as the regret incurred in the $t$-th arm pull: 

\begin{equation}\label{eq:inst_regret}
\begin{aligned}
& \qreg_\pi^{inst}(t, T, \alpha,  \pmb{\mu}, \mathbf{c})  = \ex \left[\max\{ (1-\alpha) \mu_{m_{*}} - \mu_{\pi_t},0\} \right], \\
& \creg_\pi^{inst}(t, T, \alpha, \pmb{\mu}, \mathbf{c})   = \ex \left[  \max\{c_{\pi_{t}} - c_{\optarm},0\} \right],
\end{aligned}
\end{equation}

where the expectation is over the randomness in the policy $\pi$.

Let us first assume that the clean event holds.
As both the instantaneous regrets are upper bounded by 1, $\sum_{t= 1}^{K \tau} \qreg_\pi^{inst}(t, T, \alpha,  \pmb{\mu}, \mathbf{c}) \leq  K \tau$ and $\sum_{t= 1}^{K \tau} \creg_\pi^{inst}(t, T, \alpha,  \pmb{\mu}, \mathbf{c}) \leq  K \tau$.

Now, let us look at the UCB phase of the algorithm.
Here, $\forall \ \hat{t} \leq t \leq T$, we have
$$\muucb_{\optarm}(t)  \geq \mu_{\optarm}\geq (1-\alpha)\mu_{m_*} \geq (1-\alpha)\mu_{m_t} \geq (1-\alpha) \mulcb_{m_t}(t).$$
Here, the first and fourth inequality are because of the clean event. The second and third inequality are from the definition of $\optarm$ and $m_*$ respectively.

Thus from the inequality above, the optimal arm $\optarm$ is in the set $Feas(t), \forall \hat{t} \leq t \leq T$. This implies that the arm pulled in each time step in the UCB phase, is either the optimal arm or an arm cheaper than it. Thus, instantaneous cost regret is zero for all time steps in the UCB phase of the algorithm.

Now, let us look at the quality regret in the UCB phase i.e. for any $\hat{t} \leq t \leq T$. 
We have $$\mu_{I_t} + 2 \beta_{I_t}(t) \geq \mu^{\sf UCB}_{I_t}(t) \geq (1-\alpha) \mu^{\sf LCB}_{m_t}(t) \geq (1-\alpha) \mu^{\sf LCB}_{m_*}(t) \geq (1-\alpha) \left( \mu_{m_*} - 2\beta_{m_*}(t) \right) \geq (1-\alpha) \mu_{m_*} - 2\beta_{m_*}(t) $$
The first and fourth inequality hold because the clean event holds. The second and third inequalities follow from the definition of $I_t$ and $m_t$ respectively. Thus, 

$\qreg_\pi^{inst}(t, T, \alpha, \pmb{\mu}, \mathbf{c} | \mathcal{E}) = (1-\alpha)\mu_{m_*} - \mu_{I_t} \leq 2 \left( \beta_{I_t}(t) + \beta_{m_*}(t)  \right) 
     \leq 2 \left( \sqrt{ \frac{ 2 \log{T}}{\tau}} + \sqrt{ \frac{ 2 \log{T}}{\tau} } \right) = 4 \sqrt{ \frac{ 2 \log{T}}{\tau}} $.

The total regret incurred by the algorithm is the sum of the instantaneous regrets across all time steps in the exploration and the UCB phase. Thus, 

$\qreg_\pi(T, \alpha,  \pmb{\mu}, \mathbf{c} | \mathcal{E}) \leq K\tau + 4(T-K\tau) \sqrt{ \frac{ 2 \log{T}}{\tau}} \leq K\tau + 4T\sqrt{ \frac{ 2 \log{T}}{\tau}}  $ and
$ \creg_\pi(T, \alpha,  \pmb{\mu}, \mathbf{c} | \mathcal{E}) \leq K\tau $.
Substituting $\tau = (T/K)^{2/3}$, we conclude that both cost and quality regret are $O(K^{1/3} T^{2/3} \sqrt{\log T})$.

Now, when the clean event does not hold, the cost and quality regret are at most $T$ each. The probability that the clean event does not hold is at most ${2}/{T^2}$ (Lemma 1.6 in \cite{MAL-068}). Thus, the expected cost and quality regret obtained by averaging over the clean event holding and not holding is $O(K^{1/3} T^{2/3} \sqrt{\log T})$.

\end{proof}

\begin{proof}[Proof of Theorem \ref{thm:cost_quality_consistent}]
As in the previous proof, we will define the \textit{clean event }$\mathcal{E}$ as the event that for every time $t \in [T]$ and arm $i \in [K]$, the difference between the mean reward and the empirical mean reward does not exceed the size of the confidence interval ($\beta_i(t)$) i.e. 
 $\mathcal{E} = \{ |\hmu_i(t) - \mu_i| \leq \beta_i(t), \ \forall i \in [K], \ t \in [T] \}$.  
 Also, define the quality and cost gap of each arm as $ \Delta_{\mu,i} = \max \{ (1-\alpha) \mu_{m^*}  - \mu_i, 0 \}$ and $\Delta_{c,i} = \max \{ c_{\optarm}  - c_i, 0 \}$.
 
When the clean event does not hold, both cost and quality regrets are upper bounded by $T$.
Let us look at the case when the clean event holds and analyze the cost and quality regret.
 
\textbf{Quality Regret:} Let $t_i$ be the last time $t$ when $i \in Feas(t)$ i.e. $t_i = \max\{ K, \ \max \{ t: i \in Feas(t) \} \}.$ Thus, $T_i(T) = T_i(t_i)$. 


Consider any arm $i$ which would incur a quality regret on being pulled i.e. arm $i$ such that $\mu_i < (1-\alpha) \mu_{m_*}$.  We have $$ \mu_i + 2 \beta_{i}(t_i) \geq \muucb_i(t_i) \geq (1-\alpha) \muucb_{m_{t_i}}(t_i) \geq (1-\alpha) \muucb_{m_{*}}(t_i) \geq (1-\alpha) \mu_{m_{*}}.$$
The first and fourth inequality hold because of the clean event. The third inequality is from the definition of $m_{t_i}$.

Thus, $(1-\alpha) \mu_{m_{*}} - \mu_i \leq 2 \beta_i(t_i) $. Using the definition of $\beta_i(t_i)$, we get 
$T_i(T)= T_i(t_i) \leq \frac{8 \log{T}}{ \Delta_{\mu,i}^2 } $.

Using Jensen's inequality, 
\begin{align*}
    \left( \frac{\sum_{i=1}^K T_i(T) \Delta_{\mu,i}}{T}  \right)^2 & \leq \frac{\sum_{i=1}^K T_i(T) \Delta^2_{\mu,i}}{T} \\
    & = \frac{\sum_{i=1: \ \Delta_{\mu, i} >0}^K T_i(T) \Delta^2_{\mu,i}}{T} \\
    & \leq \sum_{i=1: \Delta_{\mu, i} >0}^K \frac{8 \log{T}}{\Delta^2_{\mu,i}} \frac{ \Delta^2_{\mu,i}}{T} \\
    &= \frac{8K \log{T}}{T} 
\end{align*}
Thus, $ \qreg_\pi(T, \alpha,  \pmb{\mu}, \mathbf{c} | \mathcal{E})   \leq \sqrt{8KT \log{T}}$.

\textbf{Cost Regret:} Let $i$ be an arm such that $c_i > c_{\optarm}$. Let $\tdt_i$ be the last time when arm $i$ is pulled. Thus, $\optarm \notin Feas(\tdt_i)$.
We have, 
$ \mu_{\optarm} \leq \muucb_{\optarm}(\tdt_i) < \muucb_i(\tdt_i) \leq \mu_i(\tdt_i) + 2\sqrt{(2\log{T})/T_i(\tdt_i)} $. Thus, $$ T_i(T)= T_i(\tdt_i) <  \frac{8 \log{T}}{ (\mu_{\optarm} - \mu_i)^2} \leq \frac{8 \delta^2 \log{T}}{ (c_{\optarm} - c_i)^2} = \frac{8 \delta^2 \log{T}}{\Delta_{c,i}^2}.  $$
Using Jensen's inequality as for the case of quality regret, we get,  $ \creg_\pi(T, \alpha,  \pmb{\mu}, \mathbf{c} | \mathcal{E})   \leq \sqrt{8\delta^2 KT \log{T}}$.

Note that the probability of the clean event is at least $1 - 2/T^2$ (Lemma 1.6 in \cite{MAL-068}). Thus, the sum of the expected cost and quality regret by averaging over the clean event holding and not holding is $O(  ( 1 +  \delta )\sqrt{KT\log{T}})$.

\end{proof}

\newpage


\end{document}